  \providecommand\BibTeX{{%
    \normalfont B\kern-0.5em{\scshape i\kern-0.25em b}\kern-0.8em\TeX}}}
\newtheorem{lemma}{Lemma}
\newtheorem{theorem}{Theorem}
\newtheorem{definition}{Definition}
\newcommand{\norm}[1]{\left\lVert#1\right\rVert}
\newcommand{\yiheng}[1]{\textcolor{purple}{[Yiheng says: #1]}}
\newcommand{\addcites}[1]{\textcolor{red}{[Need citations!]}}
\DeclareMathOperator*{\argmin}{arg\,min}
\begin{document}

\title{Online Optimization with Predictions and Non-convex Losses}

%\author{Paper \#132}
\author{Yiheng Lin}
\authornote{This work was done when Yiheng Lin was visiting California Institute of Technology.}
\authornote{This work was supported by NSF grants  AitF-1637598 and CNS-1518941, with additional support for Gautam Goel provided by an Amazon AWS AI Fellowship.}
\affiliation{%
  \institution{IIIS, Tsinghua University}
  \city{Beijing}
  \country{China}
}

\author{Gautam Goel}
\authornotemark[2]
\affiliation{%
  \institution{California Institute of Technology}
  \city{Pasadena}
  \state{California}
  \country{USA}
}

\author{Adam Wierman}
\authornotemark[2]
\affiliation{%
  \institution{California Institute of Technology}
  \city{Pasadena}
  \state{California}
  \country{USA}
}

\begin{abstract}
  %We study online optimization with switching costs and prediction windows. Motivated by the recent breakthrough on Convex Body Chasing (CBC) and a reduction from Smoothed Online Convex Optimization (SOCO) to CBC, we ask whether a prediction window of finite length can improve the competitive ratio of CBC. A negative answer to this question inspires us to study the general conditions under which it is possible to leverage prediction windows efficiently. We propose two such conditions together with a new algorithm, Synchronized Fixed Horizon Control (SFHC), which can work under the two conditions we give. We further show that the convexity of hitting and switching/movement cost functions should not be a necessary condition by proposing a randomized version of SFHC.

%\gautam{I'm not overly fond of this abstract. In particular, it makes it sound like this paper is a response to the body chasing work, which it isn't. I wrote a new abstract below.}

%\yiheng{Some concern of mine:
%\begin{enumerate}
%    \item The definition of "leveraging the prediction window efficiently" is not specified here;
%    \item Shall we describe the two conditions here and emphasis that they are sufficient but maybe not necessary?
%    \item Verify that the convexity in both hitting and switching costs can be removed.
%\end{enumerate}}

We study online optimization in a setting where an online learner seeks to optimize a per-round hitting cost, which may be non-convex, while incurring a movement cost when changing actions between rounds. We ask: \textit{under what general conditions is it possible for an online learner to leverage predictions of future cost functions in order to achieve near-optimal costs?} Prior work has provided near-optimal online algorithms for specific combinations of assumptions about hitting and switching costs, but no general results are known.  In this work, we give two general sufficient conditions that specify a relationship between the hitting and movement costs which guarantees that a new algorithm, Synchronized Fixed Horizon Control (SFHC), achieves a $1+O(1/w)$ competitive ratio, where $w$ is the number of predictions available to the learner.  Our conditions do not require the cost functions to be convex, and we also derive competitive ratio results for non-convex hitting and movement costs. Our results provide the first constant, dimension-free competitive ratio for online non-convex optimization with movement costs. We also give an example of a natural problem, Convex Body Chasing (CBC), where the sufficient conditions are not satisfied and prove that no online algorithm can have a competitive ratio that converges to 1. 

%\yiheng{Gautam, just two minor things: When proving the prediction cannot help CBC, we did not leverage the reduction from SOCO to CBC. So personally, I think the reduction is a motivation for us to study the prediction window. Another thing is, the convexity assumption is discharged in both the hitting costs and movement costs. Do you think we should emphasis this?}

\end{abstract}

\maketitle

\section{Introduction}

Online optimization is a classical area in online learning with a long and impactful history.  In this paper, we study a variation of online optimization where the learner incurs a movement (switching) cost associated with the change in actions between consecutive rounds.  Specifically, we study online optimization in a setting where an online learner interacts with the environment in a sequence of rounds $1 \ldots T$. In each round, a cost function $f_t : \mathbb{R}^d \rightarrow \mathbb{R}_{\geq 0}$ is revealed and the learner chooses a point $x_t \in \mathbb{R}^d$ in response. After picking its point, the learner pays a \textit{hitting cost} $f_t(x_t)$ as well as a \textit{movement (switching) cost} $c(x_t, x_{t-1})$, which penalizes the learner for changing its actions between rounds. The movement cost adds a considerable degree of complexity to the decision making of the learner since it couples the learner's actions between rounds. A choice $x_t$ which is near the minimizer of $f_t$ and incurs little cost in round $t$ might turn out to be far away from the minimizer of $f_{t+1}$.  Thus, the learner must balance decisions about $x_t$ with the potential of movement costs in the future.  However, the learner does not have information about future hitting costs, which makes it difficult to choose the correct balance.   

The most prominent version of online optimization with movement costs is known as \textit{Smoothed Online Convex Optimization (SOCO)}, and assumes that the hitting costs are convex and the movement costs are a norm. SOCO has attracted considerable attention in the past decade, e.g., \cite{li2018online, chen2018smoothed, goel2018smoothed, shi2019value, goel2019beyond, lin2012online, lin2013dynamic, chen2015online, chen2016using, badiei2015online},  driven in part by its connection to classical online algorithms problems such as Convex Body Chasing (CBC) \cite{bubeck2018chasing, argue2019nearly, sellke2019chasing, argue2019chasing}, Metrical Task Systems (MTS) \cite{borodin1992optimal, bartal1997polylog, blum2000line}, and the $k$-server problem \cite{manasse1990competitive, bubeck2018k, buchbinder2019k}.  Additionally, much of the work on SOCO has been driven by its many applications, e.g., speech animation \cite{kim2015decision}, control \cite{goel2017thinking, goel2018smoothed}, smart grid \cite{kim2014real}, video streaming \cite{joseph2012jointly}, and data centers \cite{lin2012online, comden2019online}.
 
While initial results on SOCO provided algorithms with performance guarantees in only limited settings, e.g., \cite{lin2012online} provides a 2-competitive algorithm for 1-dimensional SOCO problems, at this point algorithms that have constant dimension-free competitive ratios in high-dimensional settings have been discovered, e.g., \cite{goel2018smoothed, goel2019beyond} provide a constant-competitive algorithm for strongly convex hitting costs and squared $\ell_2$ movement costs.  However, while it is possible to provide dimension-free, constant competitive algorithms in settings where the learner has no information about future hitting cost functions, in most applications where SOCO is used it is possible to make accurate predictions of future costs.  Such predictions are extremely valuable for the online learner and, as a result, a growing literature has considered situations where the learner has access to predictions of future costs, e.g., \cite{lin2012online, chen2015online, chen2016using, badiei2015online, li2018online, comden2019online, shi2019value}.  Most typically, this stream of work considers that the learner has access to perfect predictions of the next $w$ costs, but in some cases it is possible to extend such results to noisy predictions as well, e.g., \cite{chen2015online, chen2016using}.  

Clearly, the use of predictions is beneficial for the learner.  With access to $w$ perfect predictions, it is possible for the learner to obtain a competitive ratio that converges to 1 as $w\to\infty$.  The first such result to appear was \cite{lin2012online}, which provides an algorithm that has a competitive ratio of $1+O(1/w)$ when hitting costs are the operating cost for servers and movement costs are incurred by toggling into and out of a power-saving mode between timeslots.  Since then, other results have followed, e.g., \cite{chen2015online, chen2016using, badiei2015online, li2018online, shi2019value}, and when more stringent requirements on the cost functions are considered it is possible to design algorithms whose competitive ratio converges to 1 exponentially quickly in $w$ \cite{li2018online}.  

The discussion above highlights that there has been considerable progress in the design of competitive algorithms for SOCO, both with and without access to predictions.  However, at this point all existing results require specific assumptions on both the hitting costs and movement costs, e.g., when the hitting costs are $\alpha-$polyhedral and convex,  while the movement cost is given by $\norm{x_t - x_{t-1}}_2$ (see \cite{chen2018smoothed}); when the hitting costs are $m-$strongly convex, while the movement cost is given by $\frac{1}{2}\norm{x_t - x_{t-1}}_2^2$ (see \cite{goel2018smoothed}).  In this paper, instead of studying a specific class of costs, we ask: \textit{under what general conditions is it possible for an online learner to achieve near-optimal costs both with and without predictions?} In particular, is it possible to obtain constant-competitive algorithms without assumptions like strong-convexity and local polyhedrality; potentially even in the case of \textit{non-convex costs}?

The case of non-convex costs is particularly tantalizing given the importance of non-convex losses for machine learning and the prominence of non-convex costs in applications such as power systems and networking. Techniques from non-convex optimization have been applied to a wide variety of problems in machine learning, including matrix factorization, phase retrieval, and sparse recovery; we refer the interested reader to \cite{jain2017non} for a recent survey. The Optimal Power Flow (OPF) problem at the core of the operation of power systems is also non-convex \cite{low2014convex, low2014convex2}; thus requiring online non-convex optimization for real-time control. Non-convex optimization in online settings has also been studied in a variety of other contexts, such as portfolio optimization \cite{ardia2010differential, krokhmal2002portfolio} and support vector machines \cite{ertekin2010nonconvex, mason2000improved}, among many others.

\subsection*{Contributions of this paper} 

In this paper we introduce two general, sufficient conditions (see Section \ref{sec:pred_helps}) under which is possible to achieve a constant competitive ratio without predictions \textit{and} to leverage predictions to achieve near-optimal cost, i.e., a $1+O(1/w)$ competitive ratio. Importantly, these conditions do not require convexity of the hitting or movement costs.  

The first sufficient condition is an order of growth condition that ensures the hitting cost functions grow at least as quickly as the switching costs as one moves away from the minimizer.  The second condition requires that the switching costs satisfy an approximate version of the triangle inequality. Nearly all assumptions made in previous papers on online optimization with movement costs are special cases of these conditions, e.g., locally polyhedral costs \cite{chen2018smoothed}, strongly convex costs \cite{goel2018smoothed, goel2019beyond}, and more \cite{lin2012online, liu2011geographical}.  While we do not prove that these conditions are necessary, we show in Section \ref{sec:limit_of_pred} that an important class of online optimization problems, namely Convex Body Chasing, violates the conditions, and that furthermore it is impossible for an online learner to leverage predictions to achieve near-optimal costs in this class. We note that the Convex Body Chasing problem has attracted much recent attention (see \cite{argue2019nearly, sellke2019chasing, argue2019chasing}).

To show that these two conditions are sufficient, we propose a novel algorithm, Synchronized Fixed Horizon Control (SFHC), and show that it is constant-competitive whenever the two conditions hold, including both when the cost functions are convex and non-convex.  More specifically, we introduce two variants of SFHC, Deterministic SFHC and Randomized SFHC. 

In the case when costs are convex, Deterministic SFHC provides a competitive ratio of $\max\Big(1 + \frac{\eta + \eta^2}{2\lambda}, \eta^2\Big)$ without access to predictions and a competitive ratio of $1+O(1/w)$ in the case of predictions (Theorem \ref{thm:ARFHC_general_cr_1}). Thus, SFHC unifies two distinct lines of inquiry in the literature: how to design algorithms take advantage of predictions when they are available \cite{lin2012online, chen2015online, chen2016using, li2018online, shi2019value} and how to design algorithms that work when predictions are not available \cite{goel2018smoothed, chen2018smoothed,  bansal20152, lin2013dynamic, goel2019beyond}.  \textit{SFHC is the first algorithm to provide a constant-competitive guarantee in both settings.}  

In the case when costs are non-convex, Deterministic SFHC maintains a competitive ratio of $\max\left(1 + \frac{\eta + \eta^2}{2\lambda}, \eta^2\right)$ without access to predictions but provides a competitive ratio of $C+O(1/w)$ in the case of predictions, where $C>1$ (Theorem \ref{thm:AFHC_non_convex_1}). Thus, it does not leverage predictions to ensure near-optimal cost. However, randomization can be used to improve the result in the case of predictions.  Specifically, Randomized SFHC provides a competitive ratio of $1+O(1/w)$ for general non-convex functions that satisfy our sufficient conditions, given an oblivious adversary (Theorem \ref{thm:nonconvex_rsfhc}).  Further, the result extends (with slight modifications to the design of Randomized SFHC) to the case of a semi-adaptive adversary (Theorem \ref{thm:RRFHC_general_cr_1}). These results represent \textit{the first constant-competitive guarantees for online optimization with movement costs and non-convex losses.}

The design of SFHC is inspired by the design of Averaging Fixed Horizon Control (AFHC) \cite{lin2012online}, which has served as the basis for many algorithms in this space, e.g., \cite{chen2016using, shi2019value}.  Like AFHC, Deterministic SFHC works by averaging the choices of $w$ different subroutines. However, the subroutines are very different than AFHC. %\yiheng{Requirement 2: Add more detailed discussion of the difference between SFHC and AFHC.} 
At each time step $\tau$, one of the subroutines of AFHC optimizes the cost over the window $[\tau, \tau + w -1]$ given the starting state $x_{\tau-1}$ (see \cite{lin2012online}). The SFHC subroutines perform a similar optimization, but with an additional constraint that the point selected at the end of the window is ``synced" to the minimizer of the hitting cost at that timestep. These synchronization points ensure that, when the sufficient conditions hold, the algorithm does not drift too far from the actions of the offline optimal.  Thus, rather than optimize cost, SFHC is designed to track the offline optimal (which also implicitly leads to achieving good cost).  The key difference between Deterministic SFHC and Randomized SFHC is that Randomized SFHC chooses an action of a subroutine uniformly at random rather than averaging the choices of the subroutines.  It is perhaps surprising that randomization helps in the case of non-convex costs given that \cite{bansal20152} shows that randomization cannot help in the case of SOCO.  

%previous work on online optimization with non-convex losses only evaluated the performance from the perspective of regret \cite{yang2018optimal}, and did not consider the issue of switching costs.

\subsection*{Related literature} 
There is a large literature on online optimization, both with and without switching costs. In the setting without switching costs, most work has focused on Online Convex Optimization (OCO) \cite{hazan2016introduction}.  This problem is similar to SOCO, except that (i) there are no switching costs and (ii) the online learner picks the point $x_t$ before observing the cost $f_t$. In this problem, the goal is to design algorithms with low \textit{regret}, i.e., the goal is to find a strategy that tracks the cost of the best fixed action as closely as possible. In 2003, Zinkevich described Online Gradient Descent, the first algorithm to achieve sublinear regret for OCO \cite{zinkevich2003online}. This was subsequently generalized by algorithms such as Online Mirror Descent \cite{nemirovsky1983problem, bansal2017potential} and the Multiplicative Weights Update algorithm \cite{arora2012multiplicative} . 

Beyond the case of convex costs, online non-convex optimization (without switching costs) has also received considerable attention, e.g.,  \cite{yang2018optimal, ardia2010differential, krokhmal2002portfolio, ertekin2010nonconvex, mason2000improved}.  Most commonly the algorithms used in these papers are variations of Online Exponential Weights.  For example, recently, \cite{yang2018optimal} presents an algorithm, Online Recursive Weighting, that achieves a bound on regret that matches the lower bound in the convex setting.  

All papers above consider problems without movement costs. The inclusion of movement costs makes the problem considerably more challenging and motivates the use of a different performance measure.  Specifically, instead of regret, algorithms are evaluated with respect to the competitive ratio.  In fact, it is known that there is a fundamental incompatibility between regret and competitive ratio: it is impossible to create an algorithm for SOCO with both sublinear regret and constant competitive ratio \cite{andrew2013tale}.

In the case of movement costs, all previous papers focus on the convex setting.  In particular, the problem of Smoothed Online Convex Optimization (SOCO), i.e. OCO with switching costs, was introduced in \cite{lin2013dynamic} in the context of dynamic power management in data centers. Since then it has been applied across many domains, including speech animation \cite{kim2015decision}, multi-timescale control \cite{goel2017thinking}, video streaming \cite{joseph2012jointly}, thermal management of System-on-Chip (SoC) circuits \cite{zanini2010online}, and power generation planning \cite{kim2015decision}.

The original paper introducing SOCO \cite{lin2013dynamic} gave a 3-competitive algorithm for one dimensional action spaces. Following this work, an algorithm with a competitive ratio of 2 was introduced in \cite{bansal20152} and this was shown to be optimal in \cite{antoniadis2017tight}.
Until recently, there were no algorithms for SOCO that worked beyond one dimension ($d = 1$) that did not use predictions.  However, last year it was shown that it is possible to design competitive algorithms for SOCO beyond one dimension, provided the hitting cost functions have some structure.  Specifically, in \cite{chen2018smoothed}, Online Balanced Descent (OBD) was introduced and shown to have a dimension-free competitive ratio in the special case where the cost functions are polyhedral. Following this work, it was shown that OBD also provides a dimension-free competitive ratio when the hitting costs are strongly convex \cite{goel2018smoothed} and that a variant of OBD called Regularized OBD achieves the optimal competitive ratio when hitting costs are strongly convex. We note that this literature is fairly distinct from the work involving predictions. Up until now, the algorithms that are designed to be competitive without predictions are not able to take advantage of predictions when they are available.

In many applications the online learner has some information about future costs, and making use of these predictions of future costs is crucial.  This has prompted a great deal of work involving the design of algorithms that leverage predictions, e.g., \cite{li2018using, chen2016using, badieionline, chen2015online, shi2019value, comden2019online}. Most of this work considers models where the online learner has a prediction window of length $w$, i.e. at time $t$, the agent observes the cost functions $f_t \ldots f_{t+w-1}$ before choosing the point $x_t$ (the case $w = 1$ captures the standard SOCO setting). Naturally, as $w$ tends to infinity the algorithm has more and more information and hence should achieve better performance. In \cite{chen2015online}, it was shown that, surprisingly, Receding Horizon Control (RHC) cannot guarantee a competitive ratio that converges to one as $w$ tends to infinity; however, it was also shown that Averaging Fixed Horizon Control (AFHC) can guarantee a near-optimal competitive ratio if $w$ is sufficiently large. Later, it was shown that it is possible to obtain algorithms whose competitive ratio decays exponentially in $w$ in the setting where the hitting costs are both strongly convex with bounded gradients and uniformly bounded below by a constant and the movement cost is quadratic \cite{li2018using}. However, it is again important to note that this literature is fairly distinct from the work of designing algorithms that are competitive without predictions.  Up until now, the algorithms that are designed to be competitive with predictions are not able to be competitive without the use of predictions. 

While there has been considerable progress on designing algorithms for SOCO both with and without predictions, to this point the results all rely on specific structural assumptions about the costs, and no previous work extends to non-convex costs. In this paper, we do not directly make strong structural assumptions about the hitting functions or the switching costs; instead, we ask \textit{under what general conditions on the hitting costs and switching costs is it possible to design competitive algorithms?} Surprisingly, we show that convexity is not a necessary condition to design a competitive algorithm; this allows us to tackle a much broader range of hitting and movement costs than prior work.

\section{Problem Formulation}\label{sec:preliminary}
In this paper we study the problem of online (non-convex) optimization with switching costs. An instance of this problem consists of an initial point $x_0 \in \mathbb{R}^d$, a sequence of \textit{hitting cost} functions $f_1 \ldots f_T : \mathbb{R}^d \rightarrow \mathbb{R}_{\geq 0}$, and a \textit{switching cost}, a.k.a., \textit{movement cost}, $c: \mathbb{R}^d \times \mathbb{R}^d \rightarrow \mathbb{R}_{\geq 0}$. The sequence of hitting costs is incrementally revealed to an online learner, who picks points in response to observing the hitting cost function and incurs costs associated with the choice. 

More precisely, in the $t$-th round, the function $f_t$ is revealed to the online learner, who picks a point $x_t$ in response, and incurs the cost $f_t(x_t) + c(x_t, x_{t-1})$. The $c(x_t, x_{t-1})$ term acts as a regularizer, discouraging the learner from changing its action between rounds. Note that, without the switching cost, it is easy for the online learner to incur the optimal cost: it simply picks $x_t = \argmin_x f_t(x)$ in every round.  Thus, it is the presence of the switching cost which makes the problem interesting and non-trivial, as it couples costs between rounds.

The online learner seeks to minimize its cumulative cost across all rounds:
\begin{equation}\label{eq:total_cost}
    cost(ALG) = C_T(x) := \sum_{t=1}^T f_t(x_t) + c(x_t, x_{t-1}), 
\end{equation}
where $C_T: \mathbb{R}^{d\times T} \to \mathbb{R}^+\cup \{0\}$ is a function that computes the total cost incurred by a sequence $x$.
We measure the performance of the online learner by comparing its cost to the offline optimal cost, i.e. the cost given full knowledge of the functions $f_t$:
$$cost(OPT) = \min_{x_1^*, \ldots x_T^*} \sum_{t=1}^T f_t(x_t^*) + c(x_t^*, x_{t-1}^*). $$

The goal of this paper is to design strategies for the online learner so that it incurs nearly the same cost as the offline optimal.  This can be measured either in terms of \emph{regret}, which compares to the offline static optimal, or the \textit{competitive ratio}, which compares to the offline dynamic optimal.  Our focus in this paper is on the competitive ratio, which is a more challenging measure to achieve good performance under.  Formally, the competitive ratio is defined as $\sup_{f_1, f_2, \cdots, f_T} cost(ALG)/cost(OPT)$ and, if the online learner can pick a strategy which guarantees that the competitive ratio is finite for any sequence of hitting costs, we say that the online learner's strategy is \textit{competitive}.

In this paper, we seek to derive an algorithm that can be competitive in settings where hitting costs are non-convex.  This paper is the first to consider costs that are non-convex in the context where movement costs are present. (Non-convex online optimization without movement costs has been considered in, e.g., \cite{yang2018optimal, ardia2010differential, krokhmal2002portfolio, ertekin2010nonconvex, mason2000improved}).  In all prior work that considers movement costs,  hitting costs are assumed to be convex and so the problem is typically referred to as \textit{Smoothed Online Convex Optimization (SOCO)}. This problem was first introduced in the context of dynamic power management in data centers in \cite{lin2013dynamic}.

In many applications, the classical formulation of an online learner is too restrictive.  It is not true that the learner has no information about future costs, instead the learner has the ability to derive (noisy) forecasts of future cost functions.  As a result, there has been a great deal of work focused on designing algorithms for online learners that have access to predictions of future costs \cite{lin2012online, chen2015online, chen2016using, li2018online, shi2019value}.  This line of work, initiated by \cite{lin2012online}, seeks to design algorithms which have competitive ratios that converge to $1$ as the number of predictions available to the algorithm, $w$, grows.  More specifically, in this line of work, at time $t$ an online learner with prediction window $w$ observes the cost functions  $f_t \ldots f_{t+w-1}$ before choosing the point $x_t$.  Note that the case of $w = 1$ captures the standard SOCO setting.  Given these predictions, the learner seeks to have a competitive ratio of the form $1 + g(w)$, where $g(w)\to 0$ as $w\to\infty$.  Thus, as the number of predictions grows the cost of the learner converges to the offline optimal cost.  Under well-behaved costs it is sometimes possible for $g(w)$ to decay exponentially, e.g. \cite{li2018online}; however for general cost functions polynomial decay is the goal, e.g., \cite{chen2016using}.

Notice that, in the formulation just described, the predictions of future cost functions given to the learner are \emph{perfect}. While in real applications predictions are noisy, due to technical challenges this assumption is common, e.g., see \cite{lin2012online, li2018online}. There does exist some work that has extended the results to noisy predictions in limited cases, e.g., \cite{chen2015online, chen2016using}; however, in general, the extensions to noisy predictions are difficult and, when possible, have confirmed the insights initially proven in models with perfect predictions.  In this work, we focus on the perfect prediction model.  Given the challenges associated with providing results for non-convex costs, this is natural and necessary.  However, we do intend to investigate extensions of these results to noisy predictions in future work.  
\section{The limited power of predictions}\label{sec:limit_of_pred}
To this point, the literature studying online optimization with predictions has focused on positive results, i.e., providing algorithms that can achieve competitive ratios which converge to $1$ as the number of predictions available to the algorithm grows, e.g., \cite{lin2012online, chen2016using, li2018online, shi2019value}.  However, all positive results that exist apply to only specific forms of hitting and switching costs.  As a result, an important question that remains for the community is: \textit{Is it always possible for an online learner to leverage predictions to achieve a near-optimal competitive ratio?} 

In this section, we show that the answer is ``no.''  We show that there exist instances where predictions cannot guarantee the learner a near-optimal cost, even in the case when cost functions are convex.  Further, the instances we construct are not strange corner cases, they include an important subclass of online optimization, Convex Body Chasing (CBC), which has received considerable attention in recent years, e.g., \cite{argue2019nearly, sellke2019chasing, argue2019chasing}. 

In the following, we detail a construction that highlights a fundamental challenge for online optimization with predictions.  In particular, all previous positive results rely on a condition such as strong convexity or local polyhedrality, which rules out convex hitting cost functions that are ``flat'' around the minimizer. Our construction shows that, if this condition is not satisfied, it is not guaranteed that predictions can be leveraged by the online learner.

\subsection*{Convex Body Chasing}  As in Online Convex Optimization (OCO),  an instance of CBC consists of an online agent making decisions in a series of rounds.  In each round, the agent is presented a closed convex set $K_t \subseteq \mathbb{R}^d$. After observing the convex body, the  agent picks a point $x_t \in K_t$ and pays the movement cost $c(x_t, x_{t-1}) = \norm{x_t - x_{t-1}}$, where $\norm{\cdot}$ is a norm. Thus, the total cost incurred by the online agent is:
$$cost(ALG) = \sum_{t=1}^T c(x_t, x_{t-1}).$$

It is straightforward to see that CBC is a special case of SOCO where the hitting cost functions $f_t$ are indicator functions of the bodies $K_t$.  It is the form of this hitting cost function that creates a challenge for the learner. The fact that hitting cost functions are flat within the body means that the offline optimal can be anywhere in the body without paying an additional cost relative to that incurred by the learner. This makes it much more difficult for the learner to match the cost of the offline optimal; intuitively the learner has no information about where in the body the offline point is located, making it harder to track the offline points.

While it is easy to see that CBC is an instance of SOCO, it is perhaps surprising to discover that a SOCO problem can also be viewed as an instance of CBC. Specifically, if there exists an algorithm that can solve the CBC problem in $(d+1)-$dimensional space, we can construct an algorithm that can solve the SOCO problem in $d-$dimensional space. This fact was first noted in \cite{antoniadis2016chasing}, but the result was retracted.  Then, it was noted without a formal proof in \cite{bubeck2019improved}.  Here we provide a formal statement and proof of the reduction since the result is crucial to our goal of highlighting limitations on the power of predictions in online convex optimization. 

\begin{proposition}\label{prop:Reduce_SOCO_to_CBC}
Consider a $d-$dimensional SOCO problem where the movement cost function is given by $c(x_t, x_{t-1}) = \norm{x_t - x_{t-1}}_p$. Suppose Algorithm $A$ is $C$-competitive algorithm for CBC in $d + 1$ dimensions with movement cost function $c(x_t, x_{t-1})$.  Then, there exists a $4C$-competitive algorithm $A'$ for the $d-$dimensional SOCO problem.
\end{proposition}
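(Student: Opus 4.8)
The plan is to encode each SOCO round as two CBC rounds living in one extra dimension: an \emph{epigraph body} that forces the CBC algorithm $A$ to ``pay'' the hitting cost by climbing in the new coordinate, followed by a \emph{reset hyperplane} that forces it back down, so that the vertical travel in each consecutive pair of rounds is comparable to $f_t(x_t)$. Concretely, given a $d$-dimensional SOCO instance $(x_0, f_1, \ldots, f_T, \norm{\cdot}_p)$, I would build a CBC instance in $\mathbb{R}^{d+1}$ with initial point $(x_0, 0)$ and $2T$ convex bodies: for each $t$, the body $K_{2t-1} = \{(x,h) : h \geq f_t(x)\}$, which is closed and convex precisely because $f_t$ is convex (this is the only place convexity of the hitting cost is used), and the body $K_{2t} = \{(x,h) : h = 0\}$. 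Running $A$ on this instance yields points $(\tilde x_s, h_s)$, and the reduced algorithm $A'$ is defined to play $x_t := \tilde x_{2t-1}$; since $K_{2t-1}$ depends only on $f_t$ and $K_{2t}$ on nothing, $A'$ only ever uses $f_1, \ldots, f_t$ before committing to $x_t$, so it is a legal online algorithm.

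The crux is then two inequalities, each proved round-by-round using only the elementary bounds $\max(\norm{u}_p,|v|) \leq \norm{(u,v)}_p \leq \norm{u}_p + |v|$, which hold for every $p \in [1,\infty]$. First, $cost(A') \leq 2\, cost(A)$ on the constructed instance: writing $M_s = \norm{(\tilde x_s,h_s)-(\tilde x_{s-1},h_{s-1})}_p$ for the $s$-th CBC movement, feasibility of $K_{2t-1}$ gives $f_t(x_t) = f_t(\tilde x_{2t-1}) \leq h_{2t-1} \leq M_{2t-1}$ (using that the point just before round $2t-1$ has height $0$), while the triangle inequality together with the projection bound gives $\norm{x_t - x_{t-1}}_p = \norm{\tilde x_{2t-1} - \tilde x_{2t-3}}_p \leq M_{2t-1} + M_{2t-2}$; summing over $t$, every $M_s$ is charged with coefficient at most $2$. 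Second, $cost(OPT_{\mathrm{CBC}}) \leq 2\, cost(OPT_{\mathrm{SOCO}})$: given an offline-optimal SOCO trajectory $x_1^*, \ldots, x_T^*$, the CBC trajectory visiting $(x_t^*, f_t(x_t^*))$ at round $2t-1$ and $(x_t^*, 0)$ at round $2t$ is feasible, and its movement over those two rounds is at most $\norm{(x_t^*-x_{t-1}^*,\, f_t(x_t^*))}_p + f_t(x_t^*) \leq \norm{x_t^*-x_{t-1}^*}_p + 2 f_t(x_t^*)$.

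Chaining these with the $C$-competitiveness of $A$ yields $cost(A') \leq 2\, cost(A) \leq 2C\, cost(OPT_{\mathrm{CBC}}) \leq 4C\, cost(OPT_{\mathrm{SOCO}})$, i.e., $A'$ is $4C$-competitive. I expect the main obstacle to be purely bookkeeping rather than conceptual: making the two round-by-round accountings line up so that no $M_s$ (respectively no $f_t(x_t^*)$ or $\norm{x_t^*-x_{t-1}^*}_p$) is overcounted beyond a factor $2$, handling the $t=1$ boundary where the ``previous'' point is the shared initial state $x_0$, and checking that the two norm inequalities above do hold uniformly in $p$, including the $p=\infty$ case. Beyond that, one should be explicit that a hyperplane and an epigraph are admissible CBC bodies under the paper's definition (closed convex subsets of $\mathbb{R}^{d+1}$, not required to be bounded).
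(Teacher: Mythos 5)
Your proposal is correct and follows essentially the same route as the paper: the epigraph-plus-hyperplane construction in $\mathbb{R}^{d+1}$ with starting point $(x_0,0)$, the projection of the epigraph-round points to define $A'$, and the two factor-$2$ comparisons $cost(A') \leq 2\,cost(A)$ and $OPT_{CBC} \leq 2\,OPT_{SOCO}$ chained with $C$-competitiveness to get $4C$. The only (immaterial) difference is bookkeeping: you charge the hitting cost to the upward move into the epigraph body, whereas the paper charges it to the downward move back to the hyperplane; both yield the same factor of $2$.
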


Given Proposition \ref{prop:Reduce_SOCO_to_CBC}, we can obtain insight on the limitations of the power of predictions in SOCO by studying the power of predictions in CBC.  The following theorem shows that it is not possible to use predictions to obtain a near-optimal competitive ratio in CBC.

\begin{theorem}\label{thm:Pred_not_help}
Consider an instance of CBC in $d$-dimensional space with movement cost function $c(x_t, x_{t-1}) = \norm{x_t - x_{t-1}}$, where $\norm{\cdot}$ is an arbitrary norm. Suppose $g(d)>1$ is a lower bound on the competitive ratio for all algorithms when the length of the prediction window is $1$. For any $w>0$, the same lower bound holds for any algorithm with a prediction window of length $w$.
\end{theorem}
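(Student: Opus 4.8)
The plan is to prove this via a \emph{padding} (time-stretching) argument that transfers any hard instance for window-$1$ algorithms to an equally hard instance for window-$w$ algorithms. Fix an integer $w\ge 1$. Given a CBC instance $I=(x_0,K_1,\dots,K_T)$, define the padded instance $\hat I=(x_0,\hat K_1,\dots,\hat K_{wT})$ by $\hat K_t=K_{\lceil t/w\rceil}$; thus $\hat I$ consists of $T$ consecutive ``blocks'' of length $w$, the $j$-th block being $w$ identical copies of $K_j$. I would first observe that padding does not change the offline optimum, i.e. $cost(OPT,\hat I)=cost(OPT,I)$: the direction ``$\le$'' holds because an offline trajectory for $I$ can stand still inside each block and pay movement cost only at the $T-1$ block boundaries; the direction ``$\ge$'' holds because, from any feasible trajectory $\hat x_1,\dots,\hat x_{wT}$ for $\hat I$, the points $\hat x_w,\hat x_{2w},\dots,\hat x_{Tw}$ form a feasible trajectory for $I$ whose cost is no larger, by the triangle inequality applied to each block's sub-path (this is where we use that the movement cost is a norm).

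Next, given an arbitrary algorithm $B$ with prediction window $w$, I would build a window-$1$ algorithm $A_B$ that runs $B$ internally on the padded sequence. The key observation is a timing one: when $B$ is about to play in the first round of block $j$ (round $(j-1)w+1$ of $\hat I$), its length-$w$ prediction window shows exactly the $w$ copies of $K_j$ that constitute block $j$, and nothing of block $j+1$; hence $B$'s move there, call it $a_j:=\hat x_{(j-1)w+1}\in K_j$, is a function of $K_1,\dots,K_j$ only --- which is precisely what $A_B$ has observed by its own round $j$. So $A_B$ plays $a_j$ in round $j$, which is legal (window $1$) and feasible ($a_j\in K_j$). Applying the triangle inequality once more, $cost(A_B,I)=\sum_{j=1}^T\norm{a_j-a_{j-1}}\le cost(B,\hat I)$, because $\norm{a_j-a_{j-1}}$ is at most the length of the segment of $B$'s trajectory running from $a_{j-1}$ through the remainder of block $j-1$ and the transition into block $j$ up to $a_j$, and these segments are disjoint pieces of $B$'s path on $\hat I$. (If $B$ is randomized, this inequality holds for every realization of the randomness, hence also in expectation; $OPT$ is an offline quantity and unaffected.)

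Putting the two facts together gives $cost(B,\hat I)/cost(OPT,\hat I)\ge cost(A_B,I)/cost(OPT,I)$ for every instance $I$. By hypothesis, the competitive ratio of every window-$1$ algorithm --- in particular of $A_B$ --- is at least $g(d)$, so there is an instance $I$ (or a sequence of them) on which the right-hand side is at least $g(d)$ (respectively tends to it); the corresponding padded instances $\hat I$ then witness that $B$'s competitive ratio is at least $g(d)$. Since $B$ was an arbitrary window-$w$ algorithm, $g(d)$ is a lower bound on the competitive ratio of every algorithm with a prediction window of length $w$, which is the claim.

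The step I expect to require the most care is the construction of $A_B$ together with its cost accounting: one must take the \emph{first} move of each block (not, say, the last) as the ``representative'' point, precisely so that $A_B$ never needs information beyond its current window, and then verify that the per-block displacements $\norm{a_j-a_{j-1}}$ can be charged against non-overlapping portions of $B$'s trajectory. The remaining ingredients --- feasibility, and the equality $cost(OPT,\hat I)=cost(OPT,I)$ --- are routine consequences of the triangle inequality for the norm.
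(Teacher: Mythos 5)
Your proposal is correct and follows essentially the same route as the paper's proof: duplicate each body $w$ times, simulate the window-$w$ algorithm on the padded sequence, take its choice at the first copy of each block (which depends only on bodies revealed so far), and use the triangle inequality plus the invariance of the offline optimum under padding. No substantive differences to note.
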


To interpret this theorem, we make note of an important lower bound in the SOCO literature.  In particular, any online algorithm that can only see the current cost function (i.e., has $w=1$) has competitive ratio $\Omega(\sqrt{d})$ when the movement costs are given by the standard $\ell_2$ norm, \cite{chen2018smoothed}.  Thus, Theorem \ref{thm:Pred_not_help} implies that any online algorithm for CBC with a finite prediction window has competitive ratio lower bounded by $\Omega(\sqrt{d})$.

\begin{proof}[Proof of Theorem \ref{thm:Pred_not_help}]
Suppose an algorithm A can leverage a prediction window of length $w$ and achieve a competitive ratio of $f(d, w)$. It suffices to give an algorithm $A'$ which only needs a prediction window of length $1$ but achieves the same competitive ratio with $A$.

In the proof, we construct algorithm $A'$ using $A$ as an oracle. When a convex body $K_t$ arrives at timestep $t$, we duplicate it $w$ times and feed these $w$ convex bodies in a sequence to algorithm $A$. Specifically, at timestep $t$, we construct $w$ convex bodies $K_{t, 1} = K_{t, 2} = \cdots = K_{t, w} = K_t$ and feed the sequence $\{K_{t, 1}, K_{t, 2}, \cdots, K_{t, w}\}$ to $A$. $A$ is provided with a prediction window with length $w$ and is required to chase $wT$ convex bodies in the order:
$$K_{1, 1}, \cdots, K_{1, w}, K_{2, 1}, \cdots, K_{2, w}, \cdots, K_{T, 1}, \cdots, K_{T, w}.$$
We call this duplicated convex body chasing game with prediction $\mathcal{I}'$ to distinguish it with the original game $\mathcal{I}$ in which the learner only needs to chase $T$ convex bodies in the order $K_1, \cdots, K_T$. For convenience, we use $cost(ALG, I)$ to denote the total cost incurred by algorithm $ALG$ in instance $I$. We use $cost(OPT, I)$ to denote the offline optimal cost in instance $I$. 

As a result of the competitive ratio guarantee for $A$, we have that
\begin{equation}\label{thm:Pred_not_help:e1}
    cost(A, \mathcal{I}') \leq f(d, w)\cdot cost(OPT, \mathcal{I}').
\end{equation}
Suppose the sequence of points picked by $A$ in instance $\mathcal{I}'$ is
$$x_{1, 1}, \cdots, x_{1, w}, x_{2, 1}, \cdots, x_{2, w}, \cdots, x_{T, 1}, \cdots, x_{T, w}.$$
We instruct $A'$ to pick $x_i = x_{i, 1}, i = 1, 2, \cdots, T$ in instance $\mathcal{I}$. Notice that $A$ only looks at $K_{i, 1}, \cdots, K_{i, w}$ when it picks $x_{i, 1}$ in instance $\mathcal{I}'$. Since the bodies $K_{i, 1}, \cdots, K_{i, w}$ are just duplicates of body $K_i$, $A'$ is an online algorithm with prediction window $w = 1$ in game $\mathcal{I}$. It follows that
\begin{equation}\label{thm:Pred_not_help:e2}
    cost(A', \mathcal{I}) \leq cost(A, \mathcal{I}'),
\end{equation}
because, by the triangle inequality, we have that
\begin{equation*}
    \begin{aligned}
    \norm{x_i - x_{i-1}} &= \norm{x_{i, 1} - x_{i-1, 1}}\\
    &\leq \sum_{j = 1}^{w - 1} \norm{x_{i-1, j+1} - x_{i-1, j}} + \norm{x_{i, 1}, x_{i-1, w}}.
    \end{aligned}
\end{equation*}

On the other hand, since $K_{t, 1} = K_{t, 2} = \cdots = K_{t, w} = K_t$, the offline optimal in game $\mathcal{I}'$ can pick $x_{t, 1}^* = x_{t, 2}^* = \cdots = x_{t, w}^*$. Since the offline optimal will not waste movement in duplicated bodies, if the offline optimal of instance $\mathcal{I}$ picks $x_t^* \in K_t$, picking $x_{t, 1}^* = \cdots = x_{t, w}^* = x_t^*$ will achieve the optimal cost in instance $\mathcal{I}'$. Therefore, we have
\begin{equation}\label{thm:Pred_not_help:e3}
    cost(OPT, \mathcal{I}') = cost(OPT, \mathcal{I}).
\end{equation}
Combining \eqref{thm:Pred_not_help:e1}, \eqref{thm:Pred_not_help:e2}, and \eqref{thm:Pred_not_help:e3}, we obtain that
$$cost(A', \mathcal{I}) \leq f(d, w)\cdot cost(OPT, \mathcal{I}).$$
Therefore, algorithm $A'$ has a competitive ratio of $f(d, w)$ in instance $\mathcal{I}$.
By the assumption made in the theorem, we see that
$$f(d, w) \geq g(d),$$
which completes the proof.
\end{proof}

\section{When do predictions help?}\label{sec:pred_helps}
The construction in the previous section highlights that it is not always possible for an online learner to leverage predictions to obtain near optimal cost, even in the case when costs are convex.  However, the positive results in the prior literature show that predictions can be powerful in many specific settings. Thus, a crucial question is: \emph{Under what general conditions is it possible for an online learner to leverage predictions to acheive near optimal cost?}

In this section, we introduce two general sufficient conditions that are motivated by the construction in the previous section and which ensure that the online learner can leverage predictions to achieve near-optimal cost.  Additionally, we present a new algorithm, Synchronized Fixed Horizon Control (SFHC), that can leverage predictions to achieve near optimal cost when these conditions hold. We then analyze SFHC in the sections that follow.

\subsection{Sufficient Conditions}

While there are many positive results in the literature that highlight specific conditions where it is possible for online algorithms to leverage predictions to achieve near-optimal cost, general sufficient conditions have not been presented.  Here, we introduce sufficient conditions that are general enough to contain many of the specific assumptions in previous results as special cases and that apply beyond online convex optimization to online non-convex optimization. Formally, the sufficient conditions we identify are the following.
\begin{description}
\item[Condition I: Order of Growth.] \emph{The hitting costs $f_t$ and movement cost $c$ satisfy\\
$ f_t(x) \geq \lambda \left(c(x, v_t) + c(v_t, x) \right)$ for all $x$,
where $v_t$ is a global minimum of $f_t$.}
\item[Condition II: Approximate Triangle Inequality.] \emph{The movement cost $c$ satisfies\\ $ c(x, z) \leq \eta \left( c(x, y) + c(y, z)\right)$ for all $x, y, z$.}
\end{description}

The first condition ensures that the hitting cost functions are not too flat around the minimizer $v_t$. This is useful since it helps limit the area where the offline optimal solution can be. Notice that the need for a condition of this type is motivated by the analysis of CBC in Section \ref{sec:limit_of_pred} and it is interesting to see that many previous papers in online convex optimization have assumptions that are special cases of this condition, e.g., \cite{chen2018smoothed, goel2018smoothed, goel2019beyond}. The second condition is an approximate form of the triangle inequality. Intuitively, without such a condition the cost for an online learner to ``catch up'' after making a mistake by moving in the wrong direction could be arbitrarily large, which would make it impossible to track the offline optimal in a way that maintains a constant competitive ratio.  

We would like to emphasize the generality of these conditions.  They capture many settings where previous papers have focused.  For example, the case of $\alpha-$polyhedral hitting costs that was studied in \cite{chen2018smoothed} corresponds to $\eta = 1$ and $\lambda = \frac{\alpha}{2}$. Similarly, the case of $m-$strongly convex hitting costs studied in \cite{goel2018smoothed, goel2019beyond} corresponds to $\eta = 2$ and $\lambda = \frac{m}{2}$.  Finally, the setting of geographical load balancing across data centers studied in \cite{lin2012online,lin2013dynamic, shi2019value} corresponds to $\eta = 1$ and $\lambda = \frac{1}{2}\min_s \frac{e_{0,s}}{\beta_s}$, where $e_0 = (e_{0,1}, \cdots, e_{0,d})$ is the cost of running different kinds of servers and $\beta = (\beta_1, \cdots, \beta_d)$ is the cost of starting different kinds of servers.  (This connection is not immediately obvious and so a proof is provided in Appendix \ref{Appendix:Geo}). 

\subsection{Synchronized Fixed Horizon Control} 

In order to show that the two conditions above are sufficient to allow a learner to leverage predictions, we introduce a new algorithm, Synchronized Fixed Horizon Control (SFHC), which we show has a $1+O(1/w)$ competitive ratio whenever the sufficient conditions hold -- regardless of whether hitting costs are convex or non-convex.  Thus, our results for SFHC apply more broadly than those for any existing algorithms. Further, our results show that SFHC achieves (nearly) the same performance bound as previous algorithms in settings where they do apply. 

SFHC is a variant of Averaging Fixed Horizon Control (AFHC), which was proposed in \cite{lin2012online} and has served as the basis for a number of improved algorithms in recent years, e.g., \cite{chen2016using, shi2019value}.  Like AFHC, SFHC works by combining the trajectories determined by $w$ different subroutines $SFHC(0), SFHC(1), \cdots, SFHC(w-1)$ (Algorithm \ref{alg:SFHC(h)}).  It either combines them deterministically (by averaging them) or in a randomized manner, leading to two variations: Deterministic SFHC (Algorithm \ref{alg:DSFHC}) and Randomized SFHC (Algorithm \ref{alg:RRFHC}).  Deterministic SFHC is sufficient for the case of convex hitting costs, but randomization is valuable when costs are non-convex. That randomization helps is perhaps surprising given that it has been proven that randomization does not help in smoothed online \textit{convex} optimization \cite{bansal20152}. %\gautam{This is not surprising at all, it follows immediately from Jensen's inequality. I suggest we remove or alter this line.} \adam{I think it should stay.  Jensen's is in our proof...their argument is that there is no way for randomization to help in the convex case, which to most people means that it doesn't make since to even think about randomized algorithms}

%\yiheng{Try to explain SFHC better by comparing with AFHC.} 
To explain the workings of SFHC, we start with the case of $w=1$, i.e., where the online agent sees only the current cost function.  In this case, SFHC is ``greedy'' and picks $x_t = v_t := \argmin_x f_t(x)$, while AFHC, in contrast, picks $x_t$ that minimizes the sum of hitting cost and movement cost at timestep $t$, i.e. $x_t = \argmin_x \big(f_t(x) + c(x, x_{t-1})\big)$. SFHC with $w=1$ is a simple approach that is not optimal but, remarkably, is still competitive in many situations when Conditions I and II hold.  To understand why, consider an online agent whose goal is to choose $x_t$ to track the offline optimal point $x_t^*$, instead of simply minimizing costs.  It is impossible to exactly know $x_t^*$ ahead of time, even with predictions. However, the offline optimal point $x_t$ cannot be too far away from the minimizer $v_t$, since then it would incur significant costs; we can hence think of $v_t$ as an ``anchor" that keeps us close to the offline optimal. The Order of Growth condition controls exactly how close the offline point must be to $v_t$; the steeper the hitting costs, the more incentive for the offline optimal to stay close to $v_t$. The approximate triangle inequality property helps to bound the discrepancy caused by the fact that $v_t$ is only a proxy for $x_t^*$, not its exact location.  In particular, the approximate triangle inequality immediately gives the estimate $c(x_t, x_t^*) \leq \eta \left( c(x_t, v_t) + c(v_t, x_t^*) \right)$.

%Therefore, the online agent must do this implicitly.  One way to understand the loss involved, is to consider that the agent pays an extra cost for not being close to the offline optimal point $x_t^*$. For example, if $x_t = \argmin_x f_t(x)$, given the Order of Growth condition on hitting cost (Condition I), an extra hitting cost of $\lambda \left(c(x_t^*, x_t) + c(x_t, x_t^*)\right)$ is incurred. Then, by the Approximate Triangle Inequality condition (Condition II), the extra hitting cost and the offline optimal movement cost can be used to bound the online agent's movement cost, which highlights that a simple greedy approach, while not optimal, can work robustly.  \gautam{This paragraph is not easy to follow and needs to be rewritten.}

%\yiheng{Compare SFHC with AFHC.}
The key idea in SFHC for $w>1$ is to periodically ``sync up" with the greedy algorithm while simultaneously exploiting predictions. This guarantees that our solution cannot wander too far from the offline trajectory. This reasoning suggests the basic structure of the algorithm; at timestep $\tau$, a subroutine of SFHC chooses the sequence of points
\begin{subequations}\label{SFHC_intuition:e1}
    \begin{align}
    &\argmin_{x_{\tau:\tau+w-1}} \sum_{t=\tau}^{\tau+w-1} f_t(x_t) + c(x_t, x_{t-1})\label{SFHC_intuition:e1:objective}\\
    &\text{subject to }x_{\tau+w-1} = v_{\tau+w-1},\label{SFHC_intuition:e1:constraint}
    \end{align}
\end{subequations}
where $v_t$ is a global minimizer of $f_t$.  The constraint \eqref{SFHC_intuition:e1:constraint} in \eqref{SFHC_intuition:e1} is what leads to name \emph{Synchronized} Fixed Horizon Control.  It ensures that at the end of each fixed horizon the algorithm is constrained to make the greedy choice, i.e. it is periodically ``synchronized" with the greedy algorithm which attempts to track the offline optimal solution.  Clearly, this is strictly better than picking $x_t = v_t$ for all $t$ since it can use predictions to optimize among the trajectories that end at $v_{\tau+w-1}$. To contrast this with a related algorithm, note that AFHC also solves the optimization problem \eqref{SFHC_intuition:e1:objective} but it does not have constraint \eqref{SFHC_intuition:e1:constraint}.

Finally, as in AFHC, our algorithm maintains $w$ different subroutines performing the optimization in \eqref{SFHC_intuition:e1} separately and then combines them together to pick a point at time $t$. 

We now describe the SFHC algorithm more formally. The core piece of both Deterministic SFHC and Randomized SFHC is the SFHC$(h)$ subroutine. To define it, we need to introduce some notation first. Define $\Omega_h = \{k \mid k \equiv h \mod w, 0 \leq k \leq T\}$ and function $g_{\tau_1, \tau_2}: \mathbb{R}^{d \times (\tau_2 - \tau_1 - 1)} \to \mathbb{R}^+ \cup \{0\}$ ($0\leq \tau_1 < \tau_2 \leq T$) as:
\begin{equation}\label{equ:g_function_1}
    g_{\tau_1, \tau_2}(y) = \sum_{s=\tau_1 + 1}^{\tau_2} \left(f_s(y_s) + c(y_s, y_{s-1})\right),
\end{equation}
where the variable is $y = (y_{\tau_1 + 1},\cdots, y_{\tau_2 - 1}) \in \mathbb{R}^{d \times (\tau_2 - \tau_1 - 1)}$, $(y_{\tau_1}, y_{\tau_2})$ are defined as the minimizers $(v_{\tau_1}, v_{\tau_2})$. We can view $g_{\tau_1, \tau_2}$ as the total cost incurred between timestep $\tau_1+1$ and $\tau_2$, given the fixed choices of $y_{\tau_1}$ and $y_{\tau_2}$. Since the head and the tail of the subsequence of decision points $(y_{\tau_1}, \cdots, y_{\tau_2})$ are fixed to be the minimizers at the corresponding timesteps, only $y_{\tau_1+1}, \cdots, y_{\tau_2-1}$ can change freely in $\mathbb{R}^d$. In general, $SFHC(h)$ will minimize the function $g_{s, s+W}$ for $s \in \Omega_h$, except perhaps in the last prediction window, which may overshoot the end of the sequence of functions. To take this case into consideration, we need to extend the definition of function $g_{\tau_1, \tau_2}$ to include the case $\tau_2 > T$. If $\tau_2 > T$, $g_{\tau_1, \tau_2}: \mathbb{R}^{d \times (T - \tau_1)} \to \mathbb{R}^+ \cup \{0\}$ is defined as
\begin{equation}\label{equ:g_function_2}
    g_{\tau_1, \tau_2}(y) = \sum_{s=\tau_1 + 1}^{T} \left(f_s(y_s) + c(y_s, y_{s-1})\right),
\end{equation}
where the variable is $y = (y_{\tau_1 + 1},\cdots, y_{T}) \in \mathbb{R}^{d \times (T - \tau_1)}$ and $y_{\tau_1} = v_{\tau_1}$ is a fixed constant. Given this notation, $SFHC(h)$ is formally defined in Algorithm \ref{alg:SFHC(h)}.

%%%Yiheng: We comment out the old notations do decrease some redundancy %%%
\begin{comment}
and let $\psi(\tau_1, \tau_2)\in \mathbb{R}^{d\times (\tau_2 - \tau_1)}$ denote the solution of the optimization problem:
\begin{equation}
    \begin{aligned}
    &\argmin_{x_{\tau_1+1:\tau_2}} \sum_{t=\tau_1 + 1}^{\tau_2} f_t(x_t) + c(x_t, x_{t-1})\\
    &\text{Subject to }x_{\tau_1} = v_{\tau_1}, x_{\tau_2} = v_{\tau_2}.
    \end{aligned}
\end{equation}
Further, let $\xi(\tau) \in \mathbb{R}^{d\times (T - \tau)}$ to denote the solution of the optimization problem:
\begin{equation}
    \begin{aligned}
    &\argmin_{x_{\tau+1:T}} \sum_{t=\tau_1 + 1}^{T} f_t(x_t) + c(x_t, x_{t-1})\\
    &\text{subject to }x_{\tau} = v_{\tau}.
    \end{aligned}
\end{equation}
\end{comment}
%%%End of comment%%%
%
 %\gautam{alg currently uses 'commit' a lot. Maybe 'choose' or 'pick' is better? Adam, what do you think?} \adam{I don't see many ``commit'' left, did you already fix it? I'm fine with a different word.}

To analyze $SFHC(h)$, it is useful to formulate it as an offline optimization. In particular, in phase $h$, $SFHC(h)$ outputs the solution of
\begin{equation}
    \begin{aligned}
    &\min_x \sum_{t=1}^T f_t(x_t) + c(x_t, x_{t-1})\\
    &\text{Subject to }x_t = v_t, \forall t \in \Omega_h.
    \end{aligned}
\end{equation}
It can be easily verified that this offline optimization can be implemented as an online algorithm with a prediction window $w$. 

Using, SFHC$(h)$, we can now define Deterministic SFHC. In short, Deterministic SFHC averages the decisions of the $w$ SFHC$(h)$ subroutines with equal weighting; see Algorithm \ref{alg:DSFHC} for the full details.  Notice that when Deterministic SFHC is required to commit $x_t$, hitting costs $f_t, f_{t+1}, \cdots, f_{t+w-1}$ have been revealed, so $SFHC(h)$ is able to decide its choice $x_t^{(h)}$. 

Randomized SFHC is a variation of Deterministic SFHC which picks one of the subroutines to follow uniformly at random instead of averaging the choices of the $w$ subroutines. We note that this choice is made exactly once, before the algorithm is run; Randomized SFHC does not resample from the subroutines once its initial random choice is made. See Algorithm \ref{alg:RRFHC} for details.

\begin{algorithm}[t]
\caption{SFHC with phase $h$: SFHC(h)}\label{alg:SFHC(h)}
\begin{algorithmic}[1]
\If{$h\geq 1$}
\State Pick $x_{1:h-1} = \argmin_y g_{0,h}(y)$.
\State Pick $x_h = v_h$.
\EndIf
\For{$t=h+1, \cdots, T$}
    \If{$t-1\in \Omega_h$}
    \If{$t+w-1\leq T$}
    \State Pick $x_{t:t+w-2} = \argmin_y g_{t-1, t+w-1}(y)$.
    \State Pick $x_{t+w-1} = v_{t+w-1}$.
    \Else
    \State Pick $x_{t:T} = \argmin_y g_{t-1, t+w-1}(y)$.
    \EndIf
    \EndIf
\EndFor
\end{algorithmic}
\end{algorithm}

\begin{algorithm}[t]
\caption{Deterministic SFHC}\label{alg:DSFHC}
\begin{algorithmic}[1]
\For{$t=1, \cdots, T$}
\State Suppose $x_t^{(h)}$ is the point picked by $SFHC(h)$ at timestep $t$.
\State Commit $x_t = \frac{1}{w}\sum_{h=0}^{w-1}x_t^{(h)}$.
\EndFor
\end{algorithmic}
\end{algorithm}

\begin{algorithm}[t]
\caption{Randomized SFHC (Version A)}\label{alg:RRFHC}
\begin{algorithmic}[1]
\State Choose $h$ uniform randomly from $\{0, 1, \cdots, w - 1\}$.
\State Run $SFHC(h)$ to determine $x_1, x_2, \cdots, x_T$.
\end{algorithmic}
\end{algorithm}

\section{Convex hitting costs}\label{sec:convex}

To show that the conditions presented in Section \ref{sec:pred_helps} are sufficient for SFHC to be competitive, we start by focusing on the case of convex costs. While our goal in the paper is to provide results for non-convex costs, we present the convex setting first because the structure of the analysis in the convex case serves as the basis of the proofs in the non-convex case, with the bulk of the proof applying to both the convex and non-convex cases. The proof in the convex case thus highlights exactly where additional complexity is needed for the analysis of non-convex costs. 

The following theorem highlights that, in the convex setting, SFHC achieves a competitive ratio that matches the order of the best known bounds for many previously known algorithms, such as Online Balanced Descent (see \cite{chen2018smoothed, goel2018smoothed}) and Averaging Fixed Horizon Control (see \cite{lin2012online}), while applying more generally than any previous algorithm.

\begin{theorem}\label{thm:ARFHC_general_cr_1}
Consider an online optimization problem with movement and hitting costs that satisfy Conditions I and II. Suppose the hitting cost functions and the movement cost function are convex. 
\begin{enumerate}[(i)] 
    \item Deterministic SFHC has a competitive ratio of $\max \Big(1 + \frac{\eta + \eta^2}{2\lambda}, \eta^2\Big)$ when it has access to $w = 1$ predictions.
    \item Deterministic SFHC has a competitive ratio of \\ $\left( 1 + \frac{1}{w}\max\left(\frac{\eta}{\lambda}, 2(\eta - 1) \right)\right)$ when it has access to $w \geq 2$ predictions.
\end{enumerate} 
\end{theorem}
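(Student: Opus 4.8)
The plan is to reduce both parts to a window-by-window comparison of each subroutine $SFHC(h)$ against the offline optimal trajectory $x^*=(x_1^*,\dots,x_T^*)$ (with $x_0^*=x_0$), writing $v_t=\argmin_x f_t(x)$ and $v_0:=x_0$. Convexity enters only once, to pass from the subroutines to their average: since $f_t$ and $c$ are convex, Jensen's inequality gives $f_t\big(\frac1w\sum_h x_t^{(h)}\big)\le\frac1w\sum_h f_t(x_t^{(h)})$ and the analogous bound for the movement terms, so the total cost of Deterministic SFHC is at most $\frac1w\sum_{h=0}^{w-1}cost(SFHC(h))$; it then suffices to bound $\sum_h cost(SFHC(h))$ in terms of $cost(OPT)=\sum_t f_t(x_t^*)+c(x_t^*,x_{t-1}^*)$. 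For $w=1$ this step is vacuous, since the single subroutine just plays $x_t=v_t$.

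For part (i) ($w=1$), we have $\sum_t f_t(v_t)+c(v_t,v_{t-1})\le\sum_t f_t(x_t^*)+\sum_t c(v_t,v_{t-1})$ by optimality of $v_t$. I would bound each $c(v_t,v_{t-1})$ in two ways via Condition~II --- routing first through $x_t^*$ and then $x_{t-1}^*$, and first through $x_{t-1}^*$ and then $x_t^*$ --- obtaining $\eta c(v_t,x_t^*)+\eta^2c(x_t^*,x_{t-1}^*)+\eta^2c(x_{t-1}^*,v_{t-1})$ and $\eta^2c(v_t,x_t^*)+\eta^2c(x_t^*,x_{t-1}^*)+\eta c(x_{t-1}^*,v_{t-1})$, and then average the two. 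Summing over $t$, reindexing, and using $c(x_0^*,v_0)=0$, the endpoint terms regroup into $\frac{\eta+\eta^2}{2}\sum_t\big(c(v_t,x_t^*)+c(x_t^*,v_t)\big)$, which Condition~I bounds by $\frac{\eta+\eta^2}{2\lambda}\sum_t f_t(x_t^*)$, while the middle terms contribute $\eta^2\sum_t c(x_t^*,x_{t-1}^*)$. Combining gives the competitive ratio $\max\big(1+\frac{\eta+\eta^2}{2\lambda},\eta^2\big)$.

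For part (ii) ($w\ge2$), fix $h$. The subroutine $SFHC(h)$ splits $[0,T]$ into windows whose endpoints are the consecutive elements of $\Omega_h$ (with a possibly truncated first and last window), and on $[\tau,\tau+w]$ it returns the exact minimizer of $g_{\tau,\tau+w}$, so its cost there is at most the value of $g_{\tau,\tau+w}$ at the feasible candidate $y_s=x_s^*$ for $\tau<s<\tau+w$ (the endpoints being pinned to $v_\tau,v_{\tau+w}$). Expanding that value, using $f_{\tau+w}(v_{\tau+w})\le f_{\tau+w}(x_{\tau+w}^*)$, and applying Condition~II to the two boundary movement terms $c(x_{\tau+1}^*,v_\tau)$ and $c(v_{\tau+w},x_{\tau+w-1}^*)$ --- but crucially \emph{not} yet invoking Condition~I on the leftover $c(x_\tau^*,v_\tau)$ and $c(v_{\tau+w},x_{\tau+w}^*)$ --- I would sum over all windows of $SFHC(h)$ and then over $h$, and collect terms. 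Each $f_s(x_s^*)$ is charged once per phase in the base sum (coefficient $w$). Each edge $c(x_s^*,x_{s-1}^*)$ is charged with coefficient $1$ for the $w-2$ phases in which it lies strictly inside a window and coefficient $\eta$ for the two phases in which $s-1$ or $s$ is a synchronization point, i.e.\ $w-2+2\eta$ in total. And for the single phase with $s\in\Omega_h$, timestep $s$ is simultaneously a window's right endpoint and the next window's left endpoint, so it contributes the pair $\eta c(v_s,x_s^*)+\eta c(x_s^*,v_s)$, which Condition~I bounds by $\frac\eta\lambda f_s(x_s^*)$; \emph{no other} phase contributes such a term. (The genuine endpoints $s=0,T$ cause no blow-up, since $x_0^*=v_0$ and the last window is one-sided.) This yields $\sum_h cost(SFHC(h))\le\big(w+\frac\eta\lambda\big)\sum_t f_t(x_t^*)+(w-2+2\eta)\sum_t c(x_t^*,x_{t-1}^*)$; dividing by $w$ gives the claimed ratio $1+\frac1w\max\big(\frac\eta\lambda,2(\eta-1)\big)$.

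I expect the main obstacle to be the bookkeeping in part (ii): organizing the per-window bounds so that every offline hitting and movement cost is charged exactly the right number of times across the $w$ subroutines, and --- the one genuinely delicate point --- recognizing that at a synchronization point the two triangle-inequality detours must be combined \emph{before} Condition~I is applied, so that the budget $\frac1\lambda f_s(x_s^*)$ covers $c(v_s,x_s^*)+c(x_s^*,v_s)$ at once rather than twice; this is exactly what turns a naive $\frac{2\eta}{w\lambda}$ into the stated $\frac\eta{w\lambda}$, and the analogous merging of the two routings in part (i) produces the $2\lambda$ in that denominator. Handling the truncated boundary windows is routine and only improves the bound.
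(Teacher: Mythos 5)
Your proposal is correct and follows essentially the same route as the paper's proof: the $w=1$ case via the two averaged triangle-inequality routings through $x_t^*$ and $x_{t-1}^*$, the $w\ge 2$ case by comparing each $SFHC(h)$ window-by-window against the offline trajectory with the synchronization points replaced by $v_s$ --- including the key step of combining $c(v_s,x_s^*)+c(x_s^*,v_s)$ before invoking Condition~I so the budget $\frac{1}{\lambda}f_s(x_s^*)$ is spent only once per phase --- and a single application of Jensen's inequality (convexity of the total cost) to pass from the average of the subroutine costs to Deterministic SFHC. The only difference is organizational: you tally coefficients by charging each offline hitting and movement term across the $w$ phases, while the paper bounds each phase separately and then averages, yielding the identical bound.
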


%\yiheng{Requirement 4: Elaborate on insights of parameters $\lambda$ and $\eta$.}
Some insight for the form of the bounds in the theorem above follows from thinking about $\lambda$ and $\eta$.  
Intuitively, as $\lambda$ becomes smaller, the hitting cost functions become more flat, which makes it harder for the online agent to estimate and track the offline optimal points. For example, the Convex Body Chasing problem has $\lambda = 0$ and, as we discuss in Section \ref{sec:limit_of_pred}, this represents a difficult subclass for the online agent. On the other hand, when $\eta$ becomes large, the offline adversary, with the full knowledge of the future, can obtain more advantage over the online agent by dividing a single large jump to multiple small steps. The lower bound proved for SOCO with $\ell_2$ squared movement cost in \cite[Theorem 1]{goel2019beyond} highlights this intuition.

To provide additional context for Theorem \ref{thm:ARFHC_general_cr_1} it is useful to compare it to the special cases that have been studied previously in the literature.  
%There are two key contrasts with the previous literature.  First, SFHC is the first algorithm that can both provide competitive bounds in the case of no predictions ($w=1$) and the case where predictions are available ($w>1$).  To this point, algorithms that are designed for predictions are not competitive in settings where predictions are not available (e.g. AFHC \cite{lin2012online}\yiheng{I think this is not true because AFHC can take $w = 1$}) and algorithms designed to be competitive in the case where predictions are not available (e.g. OBD \cite{chen2018smoothed, goel2018smoothed}) are not able to make use of predictions.
The key contrast with the previous literature is that SFHC provides a competitive bound in settings much more general than previous results.  To highlight this, a first example is the $\alpha-$polyhedral problem setting considered in \cite{chen2018smoothed}, which corresponds to setting $\eta = 1, \lambda = \frac{\alpha}{2}$.  In this setting, the competitive ratio for Online Balanced Descent (OBD) proved in \cite{chen2018smoothed} is $3 + \frac{8}{\alpha}$ for $w = 1$. Theorem \ref{thm:ARFHC_general_cr_1} provides a strictly strongly and more general result.  It guarantees that Deterministic SFHC is $\left(1 + \frac{2}{w\alpha}\right)$-competitive for all $w > 0$. Another important setting that has previously received attention is the the $m-$strongly convex problem setting considered in \cite{goel2018smoothed, goel2019beyond}, which corresponds to $\eta = 2, \lambda = \frac{m}{2}$. In this context, the best known competitive ratio is $\frac{1}{2}\left(1 + \sqrt{1 + \frac{4}{m}}\right)$, achieved by Regularized Online Balanced Descent (ROBD) in the context of $w=1$ \cite{goel2019beyond}. While our result does not match the performance of ROBD for the case of $w=1$, the competitive ratio of Deterministic SFHC given by Theorem \ref{thm:ARFHC_general_cr_1} is $1 + \frac{1}{w}\max \left(\frac{4}{m}, 2\right)$ when $w > 1$ and, to the best of our knowledge, this is the best known result in this setting.  Importantly, the previous algorithms and analysis in both of these settings are tuned to the details of the setting and do not apply more broadly.  In contrast, the bound on the competitive ratio of Deterministic SFHC applies much more generally, i.e., whenever Conditions I and II hold.

\begin{comment}
Finally, a third special case that has received considerable attention is the geographical balancing problem setting introduced in \cite{lin2012online}, which is covered by setting $\eta = 1, \lambda = \frac{1}{2}\min_s \frac{e_{0,s}}{\beta_s}$. The competitive ratio guaranteed by Theorem \ref{thm:ARFHC_general_cr_1} is $1 + \frac{2}{w}\cdot \max_s \frac{\beta_s}{e_{0,s}}$, which is slightly worse than $1 + \frac{1}{w}\cdot \max_s \frac{\beta_s}{e_{0,s}}$ proved in \cite{lin2012online}.
\end{comment}

We end this section by proving Theorem \ref{thm:ARFHC_general_cr_1}.  The bulk of our proof does not require the assumption that hitting and movement costs are convex.  This is important because it means that a large fraction of the argument can be used in the context of non-convex costs, which is our focus in Section \ref{sec:non_convex}.  To highlight this fact, we organize the proof into a set of lemmas, and then apply these lemmas to prove the theorem. 

Our first lemma focuses on Case (i) in Theorem \ref{thm:ARFHC_general_cr_1}, i.e., when $w=1$. The result does not require convexity.

\begin{lemma}\label{lemma:ARFHC_general_cr_w1}
Consider an online optimization problem with movement and hitting costs that satisfy Conditions I and II. Deterministic SFHC has a competitive ratio of $\max \left(1 + \frac{\eta + \eta^2}{2\lambda}, \eta^2\right)$ when it has access to $w = 1$ predictions.
\end{lemma}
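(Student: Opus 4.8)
The plan is to first notice that when $w = 1$ Deterministic SFHC collapses to the greedy algorithm. Indeed, with $w=1$ we have $\Omega_0 = \{0,1,\dots,T\}$, so the single subroutine $SFHC(0)$ in Algorithm~\ref{alg:SFHC(h)} picks $x_t = v_t := \argmin_x f_t(x)$ at every $t$, and averaging one trajectory returns it unchanged. Hence it suffices to bound
$$cost(ALG) = \sum_{t=1}^T f_t(v_t) + \sum_{t=1}^T c(v_t, v_{t-1})$$
against $cost(OPT) = \sum_{t=1}^T f_t(x_t^*) + \sum_{t=1}^T c(x_t^*, x_{t-1}^*)$, where $(x_t^*)_{t=1}^T$ is an offline optimal trajectory and we use the standard convention $v_0 = x_0 = x_0^*$. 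The hitting-cost part is immediate: $f_t(v_t) \le f_t(x_t^*)$ since $v_t$ minimizes $f_t$, so $\sum_t f_t(v_t) \le \sum_t f_t(x_t^*)$. All the remaining work is to control the switching cost $\sum_t c(v_t, v_{t-1})$ in terms of the offline quantities.

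The key step is to estimate each $c(v_t, v_{t-1})$ by routing through the offline points via Condition~II, and to do so in two mirror-image ways. Chaining the approximate triangle inequality first through $x_t^*$ and then through $x_{t-1}^*$ gives
$$c(v_t, v_{t-1}) \le \eta\, c(v_t, x_t^*) + \eta^2\, c(x_t^*, x_{t-1}^*) + \eta^2\, c(x_{t-1}^*, v_{t-1}) \qquad \text{(A)},$$
while chaining first through $x_{t-1}^*$ and then through $x_t^*$ gives
$$c(v_t, v_{t-1}) \le \eta^2\, c(v_t, x_t^*) + \eta^2\, c(x_t^*, x_{t-1}^*) + \eta\, c(x_{t-1}^*, v_{t-1}) \qquad \text{(B)}.$$
Averaging (A) and (B) symmetrizes the coefficients on the two ``anchor'' distances, yielding
$$c(v_t, v_{t-1}) \le \tfrac{\eta+\eta^2}{2}\, c(v_t, x_t^*) + \eta^2\, c(x_t^*, x_{t-1}^*) + \tfrac{\eta+\eta^2}{2}\, c(x_{t-1}^*, v_{t-1}).$$
Summing over $t$ and reindexing $\sum_{t=1}^T c(x_{t-1}^*, v_{t-1}) = \sum_{s=0}^{T-1} c(x_s^*, v_s) \le \sum_{s=1}^T c(x_s^*, v_s)$ (the $s=0$ term vanishes under the convention) gives
$$\sum_{t=1}^T c(v_t, v_{t-1}) \le \tfrac{\eta+\eta^2}{2}\sum_{t=1}^T\bigl(c(v_t, x_t^*) + c(x_t^*, v_t)\bigr) + \eta^2\sum_{t=1}^T c(x_t^*, x_{t-1}^*).$$

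To finish, apply Condition~I at $x = x_t^*$, which yields $c(v_t, x_t^*) + c(x_t^*, v_t) \le f_t(x_t^*)/\lambda$. Substituting this and adding the hitting-cost bound gives
$$cost(ALG) \le \Bigl(1 + \tfrac{\eta+\eta^2}{2\lambda}\Bigr)\sum_{t=1}^T f_t(x_t^*) + \eta^2\sum_{t=1}^T c(x_t^*, x_{t-1}^*) \le \max\Bigl(1 + \tfrac{\eta+\eta^2}{2\lambda},\, \eta^2\Bigr) cost(OPT),$$
as claimed. I expect the main obstacle to be precisely the symmetrization trick: a single naive application of the triangle inequality to $c(v_t, v_{t-1})$ forces the worse coefficient $\eta^2$ onto both anchor terms and only gives the factor $1 + \eta^2/\lambda$, so the sharper $1 + (\eta+\eta^2)/(2\lambda)$ requires recognizing that the two natural chained estimates are reflections of one another and can be averaged. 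Minor care is also needed for the boundary term at $t=1$ (absorbed by the convention $v_0 = x_0$) and for the degenerate case $cost(OPT) = 0$, where Conditions~I and~II force $cost(ALG) = 0$ as well. Note that convexity of $f_t$ or $c$ is never used, which is why this lemma, unlike Theorem~\ref{thm:ARFHC_general_cr_1}, carries no convexity hypothesis.
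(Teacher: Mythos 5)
Your proof is correct and follows essentially the same route as the paper's: reduce to the greedy choice $x_t = v_t$, bound $c(v_t,v_{t-1})$ by the two symmetric chained applications of Condition~II through $x_t^*$ and $x_{t-1}^*$, average them, and then apply Condition~I at $x_t^*$ to absorb the anchor terms into the offline hitting cost. The extra remarks about the $t=1$ boundary term and the degenerate case are harmless additions; the core symmetrization argument is exactly the paper's.
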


\begin{proof} Since SFHC picks the minimizer $v_t$ of hitting cost function $f_t$ at timestep $t$,  the hitting cost incurred by the online agent at timestep $t$ is $f_t(v_t)$ and the movement cost is $c(v_t, v_{t-1})$. We can upper bound $c(v_t, v_{t-1})$ in the following two symmetric ways by the Approximate Triangle Inequality (Condition II):
\begin{equation}\label{thm:GREEDY_e0_1}
\begin{aligned}
    c(v_t, v_{t-1})\leq{}& \eta c(v_t, x_t^*) + \eta c(x_t^*, v_{t-1})\\
    \leq{}& \eta c(v_t, x_t^*) + \eta^2 c(x_t^*, x_{t-1}^*) + \eta^2 c(x_{t-1}^*, v_{t-1}),
\end{aligned}
\end{equation}
and
\begin{equation}\label{thm:GREEDY_e0_2}
\begin{aligned}
    c(v_t, v_{t-1})\leq{}& \eta c(v_t, x_{t-1}^*) + \eta c(x_{t-1}^*, v_{t-1})\\
    \leq{}& \eta^2 c(v_t, x_t^*) + \eta^2 c(x_t^*, x_{t-1}^*) + \eta c(x_{t-1}^*, v_{t-1}).
\end{aligned}
\end{equation}
Adding \eqref{thm:GREEDY_e0_1} and \eqref{thm:GREEDY_e0_2} together, we obtain that
$$c(v_t, v_{t-1}) \leq \frac{\eta^2 + \eta}{2} c(v_t, x_t^*) + \eta^2 c(x_t^*, x_{t-1}^*) + \frac{\eta^2 + \eta}{2} c(x_{t-1}^*, v_{t-1}).$$
Recalling that $v_t$ is the global minimum of $f_t$ we obtain that
\begin{equation}\label{thm:GREEDY_e3}
\begin{aligned}
    &f_t(v_t) + c(v_t, v_{t-1})\\
    \leq{}& f_t(x_t^*) + \frac{\eta^2 + \eta}{2} c(v_t, x_t^*) + \eta^2 c(x_t^*, x_{t-1}^*) + \frac{\eta^2 + \eta}{2} c(x_{t-1}^*, v_{t-1}).
\end{aligned}
\end{equation}
Next, summing \eqref{thm:GREEDY_e3} over timesteps $t = 1, \cdots, T$, we can compute
\begin{subequations}\label{thm:GREEDY_e4}
\begin{align}
    &\sum_{t=1}^T f_t(v_t) + c(v_t, v_{t-1})\nonumber\\
    \leq{}& \sum_{t=1}^T f_t(x_t^*) + \frac{\eta^2 + \eta}{2} \sum_{t=1}^T c(v_t, x_t^*)\nonumber\\
    &+ \eta^2 \sum_{t=1}^T c(x_t^*, x_{t-1}^*) + \frac{\eta^2 + \eta}{2} \sum_{t=1}^T c(x_{t-1}^*, v_{t-1})\nonumber\\
    \leq{}& \sum_{t=1}^T f_t(x_t^*) + \eta^2 \sum_{t=1}^T c(x_t^*, x_{t-1}^*) + \frac{\eta^2 + \eta}{2}\sum_{t=1}^T \left(c(v_t, x_t^*) + c(x_t^*, v_t) \right)\nonumber\\
    \leq{}& \sum_{t=1}^T f_t(x_t^*) + \eta^2 \sum_{t=1}^T c(x_t^*, x_{t-1}^*) + \frac{\eta^2 + \eta}{2\lambda}\sum_{t=1}^T f_t(x_t^*)\label{thm:GREEDY_e4:s1}\\
    ={}& \left(1 + \frac{\eta^2 + \eta}{2\lambda}\right)\sum_{t=1}^T f_t(x_t^*) + \eta^2 \sum_{t=1}^T c(x_t^*, x_{t-1}^*),\nonumber
\end{align}
\end{subequations}
where we use Condition II in \eqref{thm:GREEDY_e4:s1}.
\end{proof}

Next, we prove a lemma that bounds the average total cost across the $w$ subroutines of SFHC.  Again, this lemma does not require the assumption of convexity.  Thus, it serves as the basis not just for the result in Theorem \ref{thm:ARFHC_general_cr_1}, but also for our analysis in Section \ref{sec:non_convex} of non-convex costs.

\begin{lemma}\label{lemma:ARFHC_general_cr}
Consider an online optimization problem with movement and hitting costs that satisfy Conditions I and II.  The average total cost of the $w$ subroutines of SFHC, i.e., the arithmetic mean of\\ $\{cost\left(SFHC(h)\right), h = 0, \cdots, w-1\}$, is upper bounded by 
    $$\left( 1 + \frac{1}{w}\max\left(\frac{\eta}{\lambda}, 2(\eta - 1) \right)\right)\cdot cost(OPT)$$
    given access to $w \geq 2$ predictions.
\end{lemma}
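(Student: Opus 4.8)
The plan is to analyze each subroutine $SFHC(h)$ individually and bound its total cost against a localized portion of the offline optimal cost, then average over $h \in \{0, \ldots, w-1\}$. The key structural observation is that $SFHC(h)$ breaks the horizon $\{1, \ldots, T\}$ into blocks delimited by the synchronization points $\Omega_h = \{k : k \equiv h \bmod w\}$, and within each block $[s+1, s']$ (with $s, s' \in \Omega_h$ consecutive, so $s' - s = w$), the subroutine plays an \emph{optimal} trajectory subject to the endpoint constraints $x_s = v_s$ and $x_{s'} = v_{s'}$. The first step would therefore be to fix $h$, enumerate the blocks, and on each block compare the optimal constrained trajectory chosen by $SFHC(h)$ to the feasible competitor trajectory obtained by taking the offline optimal points $x_{s+1}^*, \ldots, x_{s'-1}^*$ on the interior and $v_s, v_{s'}$ at the endpoints. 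Since $SFHC(h)$ minimizes $g_{s, s'}$, its cost on the block is at most the cost of this competitor trajectory.

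The second step is to bound the cost of that competitor trajectory. Its hitting costs are exactly $\sum_{t=s+1}^{s'-1} f_t(x_t^*)$ (the interior is unchanged) plus possibly a boundary hitting term, and its movement costs are $\sum_{t=s+2}^{s'-1} c(x_t^*, x_{t-1}^*)$ on the interior plus two ``junction'' terms $c(x_{s+1}^*, v_s)$ and $c(v_{s'}, x_{s'-1}^*)$ where the trajectory is forced back to the minimizers. Each junction term is controlled using Condition II: $c(x_{s+1}^*, v_s) \le \eta\, c(x_{s+1}^*, x_s^*) + \eta\, c(x_s^*, v_s)$, and then $c(x_s^*, v_s)$ is absorbed into $f_s(x_s^*)$ via Condition I (which gives $f_s(x_s^*) \ge \lambda(c(x_s^*, v_s) + c(v_s, x_s^*))$). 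Summing over all blocks for a fixed $h$, the interior hitting and movement costs telescope into (sub-portions of) $cost(OPT)$, and the extra contribution consists of one ``junction overhead'' per synchronization point in $\Omega_h$, each bounded by something like $\frac{\eta}{\lambda} f_s(x_s^*) + 2(\eta-1) c(x_s^*, x_{s-1}^*)$ or a similar combination — this is where the $\max(\eta/\lambda, 2(\eta-1))$ factor will emerge. Care is needed with the first partial block $[0, h]$ and the last (possibly overshooting) block, handled via the extended definition \eqref{equ:g_function_2}; these should contribute only favorably or be absorbed.

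The third step is the averaging. When we sum $cost(SFHC(h))$ over $h = 0, \ldots, w-1$, every timestep's hitting cost $f_t(x_t^*)$ and every movement cost $c(x_t^*, x_{t-1}^*)$ from the offline optimal appears as an ``interior'' term for $w-1$ of the $w$ phases and as a ``junction/boundary'' term for exactly one phase (the one where $t$ or $t-1$ lands on a synchronization point). Thus $\sum_h cost(SFHC(h)) \le w \cdot cost(OPT) + \big(\text{one junction-overhead per timestep}\big)$, and dividing by $w$ yields $\big(1 + \frac{1}{w}\max(\frac{\eta}{\lambda}, 2(\eta-1))\big) cost(OPT)$. The main obstacle I anticipate is the careful bookkeeping at the block junctions — in particular making sure the two triangle-inequality expansions at the left and right junctions of adjacent blocks do not double-count the same offline movement term $c(x_t^*, x_{t-1}^*)$, and getting the constant to come out as exactly $\max(\eta/\lambda, 2(\eta-1))$ rather than a sum of the two; this likely requires the same symmetric two-sided expansion trick used in Lemma \ref{lemma:ARFHC_general_cr_w1} (averaging \eqref{thm:GREEDY_e0_1} and \eqref{thm:GREEDY_e0_2}) so that the $\eta/\lambda$ and $2(\eta-1)$ contributions attach to disjoint quantities and the $\max$ is justified. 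The boundary blocks at the two ends of the horizon are the secondary nuisance, but since $v_0 = x_0$ is the fixed start and the last block only drops the final synchronization constraint, they should not worsen the bound.
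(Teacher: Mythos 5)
Your proposal follows essentially the same route as the paper's proof of Lemma \ref{lemma:ARFHC_general_cr}: per-phase block decomposition using the optimality of each $g_{s,s+w}$ minimization against the feasible competitor obtained by replacing $x_s^*$ with $v_s$ at sync points, junction bounds via Conditions I and II, and averaging over the $w$ phases so that each timestep lands in $\Omega_h$ for exactly one $h$. The only refinement worth noting is that no separate symmetric two-sided expansion (as in Lemma \ref{lemma:ARFHC_general_cr_w1}) is needed: at each sync point the left-junction overhead $\eta\, c(x_s^*, v_s)$ and the right-junction overhead $\eta\, c(v_s, x_s^*)$ combine directly under the two-sided form of Condition I into $\frac{\eta}{\lambda} f_s(x_s^*)$, leaving $(\eta-1)(M_s^*+M_{s+1}^*)$ on the movement side, and the $\max$ then emerges exactly as you anticipated because the hitting and movement overheads attach to disjoint parts of $cost(OPT)$.
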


\begin{figure}
\begin{center}
    \includegraphics{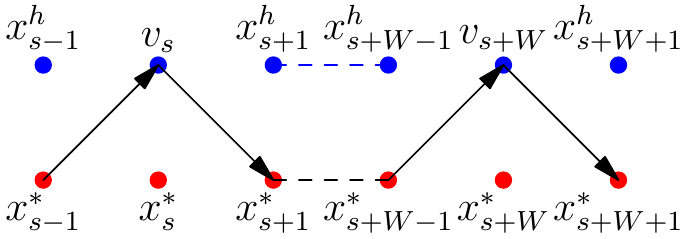}
\end{center}
\caption{\emph{\textbf{Illustration of the Proof of Theorem \ref{thm:ARFHC_general_cr_1}.} By the definition of $SFHC(h)$, we know the blue point sequence (picked by $SFHC(h)$) is better than the black path (offline optimal solution with $x_s^*$ substituted by $v_s$, for all $s \in \Omega_h$). The task is to compare the black path with the red point sequence (the actual offline optimal).}}
\label{figure:DSFHC_Diagram}
\end{figure}

\begin{proof}[Proof of Lemma \ref{lemma:ARFHC_general_cr}]
To begin, we define some notation.  For a point sequence $x = (x_1, x_2, \cdots, x_T) \in \mathbb{R}^{d \times T}$, we use $x_{\tau: t} \in \mathbb{R}^{d \times (t - \tau + 1)}$ to denote $x$'s subsequence $(x_\tau, x_{\tau + 1}, \cdots, x_t).$
Also, recall that $C_T(x) = \sum_{t=1}^T \left(f_t(x_t) + c(x_t, x_{t-1})\right),$ is the total cost of the sequence $x$. We use $H_t^* = f_t(x_t^*)$ and $M_t^* = c(x_t^*, x_{t-1}^*)$ to denote the offline optimal hitting cost and movement cost incurred at timestep $t$.  Finally, recall that we can formulate $SFHC(h)$ as an offline optimization
\begin{equation}
    \begin{aligned}
    &\min_x \sum_{t=1}^T f_t(x_t) + c(x_t, x_{t-1})\\
    &\text{Subject to }x_t = v_t, \forall t \in \Omega_h.
    \end{aligned}
\end{equation}
In general, $SFHC(h)$ breaks the sequence of actions up into subsequences of length $w-1$ by fixing $x_t^h = v_t$ for $t \in \Omega_h$. It is able to minimize each subsequence separately because they are independent from each other due to the synchronization enforced by the algorithm. However, the first and the last subsequences need additional attention because their length may be less than $w-1$.

Recall that $SFHC(h)$ (defined in Algorithm \ref{alg:SFHC(h)}) selects the minimizers of function $g_{s, s+w}$ as its choice $x_{s+1}^h, \cdots, x_{s+w-1}^h$ for $s\in \Omega_h$ and $x_{s+w}^h = v_{s+w}$ if $s + w \leq T$. The definition of the function $g$ can be found in Section \ref{sec:pred_helps}. %As discussed before, we need to manage two special cases. First, if $h > 0$, $x_1^h, \cdots, x_{h-1}^h$ are the minimizers of $g_{0, h}$ and $x_h^h = v_h$; if $s + w > T$, $x_{s+1}^h, \cdots, x_T^h$ are the minimizers of $g_{s, s+w}$.
For all $s \in \Omega_h$, since $x_{s+1: s+w-1}^h$ are the minimizers of $g_{s, s+w}$, we have that
\begin{equation}\label{thm:RAFHC_general_cr_1:e1}
    g_{s, s + w}(x_{s + 1: s + w - 1}^h) \leq g_{s, s + w}(x_{s + 1: s + w - 1}^*),
\end{equation}
where $x^h\in \mathbb{R}^{d\times T}$ is the solution picked by $SFHC(h)$ and $x^*\in \mathbb{R}^{d\times T}$ is the offline optimal solution. Similarly, we also have that
\begin{equation}\label{thm:RAFHC_general_cr_1:e1_1}
    g_{0, h}(x_{1: h-1}^h) \leq g_{0, h}(x_{1: h-1}^*).
\end{equation}
Summing up \eqref{thm:RAFHC_general_cr_1:e1} over $s \in \Omega_h$ together with \eqref{thm:RAFHC_general_cr_1:e1_1}, we can upper bound the total cost of $SFHC(h)$ by
\begin{equation}\label{thm:RAFHC_general_cr_1:e1_2}
    \begin{aligned}
    C_T(x^h) &= g_{0, h}(x_{1: h-1}^h) + \sum_{s \in \Omega_h}g_{s, s + w}(x_{s + 1: s + w - 1}^h)\\
    &\leq g_{0, h}(x_{1: h-1}^*) + \sum_{s \in \Omega_h}g_{s, s + w}(x_{s + 1: s + w - 1}^*).
    \end{aligned}
\end{equation}
Equation \eqref{thm:RAFHC_general_cr_1:e1_2} highlights that 
if we substituted $x_s^*$ by $v_s$ for all $s \in \Omega_h$ in the offline optimal solution, the resulting sequence (which satisfies the synchronization constraint) is no better than the solution picked by $SFHC(h)$. However, the actual offline optimal is not subject to the synchronization constraint. 

Now we compare the upper bound in \eqref{thm:RAFHC_general_cr_1:e1_2} with the actual offline optimal cost. An illustration is given in Figure \ref{figure:DSFHC_Diagram}. We see that
\begin{equation}\label{thm:RAFHC_general_cr_1:e1_3}
    \begin{aligned}
    C_T(x^h) \leq{}& g_{0, h}(x_{1: h-1}^*) + \sum_{s \in \Omega_h}g_{s, s + w}(x_{s + 1: s + w - 1}^*)\\
    ={}& C_T(x^*) - \sum_{s \in \Omega_h}f_s(x_s^*) + \sum_{s \in \Omega_h}f_s(v_s)\\
    &+ \sum_{s \in \Omega_h\cap [T-1]} \left(c(x_{s+1}^*, v_{s}) - c(x_{s+1}^*, x_{s}^*)\right)\\
    &+ \sum_{s \in \Omega_h\cap [T]} \left(c(v_{s}, x_{s - 1}^*) - c(x_{s}^*, x_{s - 1}^*)\right).
    \end{aligned}
\end{equation}

\noindent Since $v_s = \argmin_x f_s(x)$, we have
\begin{equation}\label{thm:RAFHC_general_cr_1:e1_4}
    f_s(v_s) \leq f_s(x_s^*), \forall s \in \Omega_h.
\end{equation}
Conditions I and II guarantee that for all $s \in \Omega_h$
\begin{equation}\label{thm:RAFHC_general_cr_1:e1_5}
    c(x_{s+1}^*, v_{s}) \leq \eta \left(c(x_{s+1}^*, x_{s}^*) + c(x_{s}^*, v_{s})\right),
\end{equation}
and
\begin{equation}\label{thm:RAFHC_general_cr_1:e1_6}
    \begin{aligned}
    c(v_{s}, x_{s - 1}^*)\leq{}& \eta \left(c(x_{s}^*, x_{s - 1}^*) + c(v_{s}, x_{s}^*)\right)\\
    \leq{}& \eta \left(c(x_{s}^*, x_{s - 1}^*) + \frac{1}{\lambda}f_{s}(x_{s}^*) - c(x_{s}^*, v_{s})\right).
    \end{aligned}
\end{equation}
Substituting \eqref{thm:RAFHC_general_cr_1:e1_4}, \eqref{thm:RAFHC_general_cr_1:e1_5} and \eqref{thm:RAFHC_general_cr_1:e1_6} into \eqref{thm:RAFHC_general_cr_1:e1_3}, we see that
\begin{equation}\label{thm:RAFHC_general_cr_1:e1_7}
    \begin{aligned}
    &C_T(x^h) - C_T(x^*)\\
    \leq{}& (\eta - 1)\sum_{s \in \Omega_h\cap [T-1]}c(x_{s+1}^*, x_{s}^*) + \eta \sum_{s \in \Omega_h\cap [T-1]}c(x_{s}^*, v_s)\\
    &+ (\eta - 1)\sum_{s \in \Omega_h\cap [T]} c(x_{s}^*, x_{s - 1}^*) + \frac{\eta}{\lambda}\sum_{s \in \Omega_h\cap [T]} f_s(x_s^*)\\
    &- \eta \sum_{s \in \Omega_h\cap [T]}c(x_{s}^*, v_s)\\
    \leq{}& \frac{\eta}{\lambda}\sum_{s \in \Omega_h}H_s^* + (\eta - 1)\sum_{s \in \Omega_h}\left(M_s^* + M_{s+1}^*\right),
    \end{aligned}
\end{equation}
where we assume $M_{T+1}^* = 0$ without loss of generality.

Recall that we use $x^h = (x_1^h, \cdots, x_T^h)$ to denote the point sequence picked by $SFHC(h)$. The averaging cost incurred by all $w$ subroutines satisfies that
\begin{subequations}\label{thm:RAFHC_general_cr_1:e8}
    \begin{align}
    &\frac{1}{w}\sum_{h = 1}^{w} C_T(x^h)\nonumber\\
    \leq{}& \frac{1}{w}\sum_{h = 1}^{w} \left( C_T(x^*) + \frac{\eta}{\lambda}\sum_{s \in \Omega_h}H_s^* + (\eta - 1)\sum_{s \in \Omega_h}\left(M_s^* + M_{s+1}^*\right) \right)\label{thm:RAFHC_general_cr_1:e8:s1}\\
    \leq{}& C_T(x^*) + \frac{1}{w}\cdot \frac{\eta}{\lambda}\sum_{h = 1}^{w} \sum_{s \in \Omega_h}H_s^* + \frac{\eta - 1}{w}\sum_{h = 1}^{w} \sum_{s \in \Omega_h}\left(M_s^* + M_{s+1}^*\right)\nonumber\\
    \leq{}& cost(OPT) + \frac{1}{w}\cdot \frac{\eta}{\lambda}\sum_{t=1}^T H_t^* + \frac{2(\eta - 1)}{w}\sum_{t=1}^T M_t^*,\label{thm:RAFHC_general_cr_1:e8:s2}
    \end{align}
\end{subequations}
where in \eqref{thm:RAFHC_general_cr_1:e8:s1} we use \eqref{thm:RAFHC_general_cr_1:e1_7}; in \eqref{thm:RAFHC_general_cr_1:e8:s2} we use $cost(OPT)$ to denote $C_T(x^*)$.

\end{proof}

While Lemma \ref{lemma:ARFHC_general_cr} guarantees good average performance across the $w$ subroutines of $SFHC$, the question it does not answer is how to combine different phases together into a single algorithm. This question turns out to be non-trivial and needs to be answered differently depending on properties of the cost function. 

In the case of convex costs, the most direct way to convert Lemma \ref{lemma:ARFHC_general_cr} to a competitive ratio result is to apply Jensen's inequality. This approach requires the convexity of both the hitting and movement cost functions and completes our proof of Theorem \ref{thm:ARFHC_general_cr_1}.  However, in the case of non-convex costs, more work is required.  That is the focus of Section \ref{sec:non_convex}.

\begin{proof}[Proof of Theorem \ref{thm:ARFHC_general_cr_1}]
Case (i) follows directly from Lemma \ref{lemma:ARFHC_general_cr_w1}.  Thus, we only need to consider Case (ii).  Since the movement cost function $c$ and every hitting cost function $f_t$ is convex, we know that the total function $C_T$ is convex. Therefore, by Jensen's inequality, we obtain from Lemma \ref{lemma:ARFHC_general_cr} that the cost of Deterministic SFHC satisfies
\begin{equation*}
\begin{aligned}
    cost(DSFHC) &\leq \frac{1}{w}\sum_{h = 1}^{w} cost(SFHC(h))\\
    &\leq \left( 1 + \frac{1}{w}\max\left(\frac{\eta}{\lambda}, 2(\eta - 1) \right)\right)\cdot cost(OPT),
\end{aligned}
\end{equation*}
which completes the proof.  
\end{proof}
\section{Non-convex hitting costs}\label{sec:non_convex}

The main contribution of this paper is to give the first algorithm that leverages predictions to obtain a dimension-free competitive ratio for smoothed online \emph{non-convex} optimization.  To the best of our knowledge, no prior algorithms achieve a dimension-free competitive ratio for non-convex hitting costs.
%\yiheng{Requirement 1: Elaborate on why existing approaches fail to provide a constant competitive ratio for the case of no prediction.} It is hard to say precisely why the previous works fail, whether it is that the algorithms are not effective or if it is the analysis that is lacking.  But, 
For example, OBD (see \cite{chen2018smoothed, goel2018smoothed}) is the state-of-the-art algorithm for the convex-case without predictions and it uses properties of the intermediate geometry of the convex hitting cost functions which do not hold for non-convex functions.  Therefore, it does not directly extend to the non-convex setting.

The sufficient conditions we have identified in Section \ref{sec:pred_helps} are the key to enabling our analysis in the non-convex setting.  Specifically, the sufficient conditions do not rely on convexity of the hitting cost functions and so apply more broadly then online convex optimization.  However, the proof of Theorem \ref{thm:ARFHC_general_cr_1} for the convex case relies on convexity through its use of Jensen's inequality, which requires the total function $C_T: \mathbb{R}^{d\times T} \rightarrow \mathbb{R}_{\geq 0}$ (as defined in \eqref{eq:total_cost}) to be convex. This step of the proof is needed due to the fact that Deterministic SFHC averages the different subroutines.  Thus, extending the result to the case of non-convex hitting costs requires finding an alternative to Jensen's inequality.

In this section, we present two results that take two different paths to replacing Jensen's inequality in the proof.  The first uses a generalization of Jensen's inequality that can be applied to non-convex cost functions.  In particular, if the non-convex function is easily ``convexifiable'' then Deterministic SFHC provides $C + O(1/w)$, for a dimension-free constant $C$.  The second approach leads to a better performance bound. In particular, by using randomization to replace the averaging in SFHC we obtain a result that applies broadly to non-convex functions, yielding an expected competitive ratio for an oblivious adversary that is $1+O(1/w)$, as desired.  Further, with a bit more complexity in the form of the randomness, the algorithm can be adapted to perform well against a semi-adaptive adversary.  

% However, we haven't use the convexity condition throughout the proof except this step. Therefore, when considering how to avoid the convexity assumption on hitting costs, the averaging step should be substituted by some other methods.

%\yiheng{Shall we add something about offline non-convex optimization here?}
%Although computing the global minimum of a non-convex function is intractable in general, we assume there exists an oracle that can solve offline non-convex optimization efficiently. Since our focus is understanding how non-convexity affects the competitive ratio of the online algorithm, it is reasonable to make this assumption. \yiheng{Do we have citations here?} In reality, the oracle can be substituted by some heuristic methods for certain kinds of non-convex functions.

%\yiheng{Requirement 3: Discussion on solver for non-convex functions.}
Throughout this section, we focus on the challenges associated with the \textit{online} component of the problem, rather than the challenges associated with solving a non-convex optimization efficiently.  Thus, we assume that it is possible to solve the optimization problems involved in SFHC and achieve a global optimizer, even though they are non-convex. Under some reasonable assumptions, we can solve some specific non-convex optimization problems efficiently. We refer interested readers to \cite{jain2017non} for a survey. In practice, some error will result from limited computation time or using heuristic techniques to solve the non-convex optimization and that will lead to an additional cost degradation that depends on the technique used and the form of the non-convex functions.  It will be interesting to bound the interaction between the approximation error of non-convex solvers and the competitive ratio of SFHC for specific heuristics and classes of non-convex functions in future work, but that is beyond the scope of the current paper.

\subsection{Deterministic SFHC}\label{subsec:det}

The first approach we use to move beyond convex hitting costs is to focus on hitting cost functions that are ``convexifiable.'' For such functions, we can hope to obtain a dimension-free competitive ratio using the \emph{same} algorithm as in the convex case: Deterministic SFHC.  

To analyze Deterministic SFHC in this setting, we need to introduce the notion of a \textit{convexifiable function}. Intuitively, the \textit{convexifier} of a convexifiable function can be viewed as the "distance" between this function and the convexity property.  Formally, we have the following definition.

\begin{definition}\label{def:convexifiable}
Suppose $C$ is convex set in $\mathbb{R}^d$. If a function $f: C \to \mathbb{R}$ satisfies that $\exists \alpha \in \mathbb{R}, s.t. \phi(x) = f(x) - \frac{\alpha}{2}\norm{x}_2^2$ is convex, $f$ is \textbf{convexifiable} and $\alpha$ is a \textbf{convexifier} of function $f$.
\end{definition}

This definition is useful because it allows us to use a generalization of Jensen's inequality, which was a crucial tool in our analysis in the convex setting.  The following proposition is Theorem 1 in \cite{zlobec2004jensen}.

\begin{proposition}[Jensen's inequality for convexifiable functions]\label{prop:Jensen_convexifiable} Consider a convexifiable function $f:\mathbb{R}^d \to \mathbb{R}$ on a bounded nontrivial convex set $C$ of $\mathbb{R}^d$ and its convexifier $\alpha$. Then
$$f\left(\sum_{i=1}^p \lambda_i x_i\right) \leq \sum_{i=1}^p \lambda_i f(x_i) - \frac{\alpha}{2}\left(\sum_{i,j = 1, i < j}^p \lambda_i \lambda_j \norm{x_i - x_j}_2^2 \right)$$
for every set of $p$ points $x_i, i = 1, \cdots, p$ in $C$ and all real scalars $\lambda_i \geq 0, i = 1, \cdots, p$, with $\sum_{i=1}^p \lambda_i = 1$.
\end{proposition}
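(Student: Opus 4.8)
The plan is to reduce the claim to the ordinary finite Jensen inequality applied to the convex function $\phi(x) = f(x) - \frac{\alpha}{2}\norm{x}_2^2$, together with an exact quadratic identity that measures how far $\norm{\cdot}_2^2$ departs from linearity along a convex combination. Since $C$ is convex and every $x_i \in C$, the point $\bar x := \sum_{i=1}^p \lambda_i x_i$ again lies in $C$, so all terms below are well defined; boundedness of $C$ plays no role in this direction of the argument (in \cite{zlobec2004jensen} it is used only to guarantee existence of a convexifier).

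The sole computational ingredient is the identity
\begin{equation*}
    \norm{\sum_{i=1}^p \lambda_i x_i}_2^2 = \sum_{i=1}^p \lambda_i \norm{x_i}_2^2 - \sum_{1 \le i < j \le p} \lambda_i \lambda_j \norm{x_i - x_j}_2^2,
\end{equation*}
valid for any $\lambda_i \ge 0$ with $\sum_i \lambda_i = 1$. I would verify it by expanding $\norm{x_i - x_j}_2^2 = \norm{x_i}_2^2 - 2\langle x_i, x_j\rangle + \norm{x_j}_2^2$ in the last sum: the squared-norm terms collect to $\sum_k \lambda_k(1 - \lambda_k)\norm{x_k}_2^2$ (using $\sum_{j \ne k}\lambda_j = 1 - \lambda_k$) and the inner-product terms to $-2\sum_{i<j}\lambda_i\lambda_j\langle x_i, x_j\rangle$; adding the diagonal contribution $\sum_i \lambda_i^2\norm{x_i}_2^2$ arising from $\sum_i \lambda_i \norm{x_i}_2^2$ exactly reassembles $\norm{\sum_i \lambda_i x_i}_2^2$. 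This is the familiar bias--variance / law-of-total-variance decomposition.

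Given this, the proof is immediate: convexity of $\phi$ on $C$ yields the finite Jensen inequality $\phi(\bar x) \le \sum_i \lambda_i \phi(x_i)$, and substituting $f = \phi + \frac{\alpha}{2}\norm{\cdot}_2^2$ together with the identity gives
\begin{align*}
    f(\bar x) &= \phi(\bar x) + \frac{\alpha}{2}\norm{\bar x}_2^2\\
    &\le \sum_i \lambda_i \phi(x_i) + \frac{\alpha}{2}\Big( \sum_i \lambda_i \norm{x_i}_2^2 - \sum_{i<j}\lambda_i\lambda_j\norm{x_i - x_j}_2^2 \Big)\\
    &= \sum_i \lambda_i\Big(\phi(x_i) + \frac{\alpha}{2}\norm{x_i}_2^2\Big) - \frac{\alpha}{2}\sum_{i<j}\lambda_i\lambda_j\norm{x_i - x_j}_2^2\\
    &= \sum_i \lambda_i f(x_i) - \frac{\alpha}{2}\sum_{i<j}\lambda_i\lambda_j\norm{x_i - x_j}_2^2,
\end{align*}
which is precisely the asserted inequality. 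I do not expect any genuine obstacle here: the entire content is the elementary identity above, which holds for every sign of $\alpha$, so the argument never uses $\alpha \ge 0$ --- a negative convexifier merely makes the correction term nonnegative, so the bound is then weaker than a naive Jensen estimate (unavailable anyway, since $f$ need not be convex). The only point worth a sentence of care is that applying finite Jensen to $\phi$ requires the convex combination to remain in the domain, which is immediate from convexity of $C$.
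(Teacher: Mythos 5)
Your proof is correct. Note, however, that the paper does not prove this proposition at all: it is quoted verbatim as Theorem 1 of the cited work of Zlobec on Jensen's inequality for convexifiable functions, so there is no internal argument to compare against. Your derivation is the natural self-contained one: ordinary finite Jensen applied to the convex function $\phi = f - \frac{\alpha}{2}\norm{\cdot}_2^2$, combined with the exact variance-type identity $\norm{\sum_i \lambda_i x_i}_2^2 = \sum_i \lambda_i \norm{x_i}_2^2 - \sum_{i<j}\lambda_i\lambda_j\norm{x_i - x_j}_2^2$, whose verification you carry out correctly (the squared-norm terms collect to $\sum_k \lambda_k(1-\lambda_k)\norm{x_k}_2^2$ and recombine with the diagonal and cross terms to give the left-hand side). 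Since the identity is an equality, the only inequality used is Jensen for $\phi$, so your observation that the sign of $\alpha$ is irrelevant is right, as is the remark that boundedness of $C$ is not needed for this implication (it matters in the cited reference only for existence-of-convexifier results, not for the inequality itself); what does matter, and what you address, is that $\bar x = \sum_i \lambda_i x_i$ stays in $C$ so that $\phi(\bar x)$ is defined, which follows from convexity of $C$. What your elementary argument buys, relative to the paper's citation, is that the proposition becomes a two-line consequence of the definition of a convexifier, with no appeal to external machinery — which is arguably preferable given how the paper then uses it in the proof of Lemma \ref{lem:AFHC_non_convex_1}.
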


Using this generalized Jensen's inequality as a tool, we can prove the following theorem, which bounds the competitive ratio of Deterministic SFHC for non-convex hitting costs.

\begin{theorem}\label{thm:AFHC_non_convex_1}
Consider an online optimization problem where the movement and hitting costs satisfy Conditions I and II.  Suppose additionally that $f_t$ is convexifiable and its convexifier is upper bounded by $\alpha > 0$ and that $c(x_t, x_{t-1}) = \frac{1}{2}\norm{x_t - x_{t-1}}_2^2.$ Under this assumption, we have $\eta = 2$.
\begin{enumerate}[(i)] 
    \item Deterministic SFHC has a competitive ratio of $\max \left(1 + \frac{3}{\lambda}, 4\right)$ when it has access to $w = 1$ predictions.
    \item Deterministic SFHC has a competitive ratio of $$\left(1 + \frac{\alpha}{\lambda}\right)\cdot \left( 1 + \frac{1}{w}\max\left(\frac{2}{\lambda}, 2 \right)\right)$$
    when it has access to $w \geq 2$ predictions.
\end{enumerate} 
\end{theorem}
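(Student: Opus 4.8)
The plan is to handle the two cases separately, and in both to reduce to lemmas already proved in the excerpt. Part (i) should follow with essentially no new work. The squared Euclidean movement cost satisfies Condition II with $\eta = 2$: from $\norm{x-z}_2^2 \le 2\norm{x-y}_2^2 + 2\norm{y-z}_2^2$ we get $c(x,z)\le 2\bigl(c(x,y)+c(y,z)\bigr)$, and the midpoint $y=(x+z)/2$ shows $2$ is the smallest admissible constant. Substituting $\eta=2$ into Lemma~\ref{lemma:ARFHC_general_cr_w1} --- which is proved without any convexity assumption --- gives competitive ratio $\max\bigl(1+\tfrac{\eta+\eta^2}{2\lambda},\,\eta^2\bigr)=\max\bigl(1+\tfrac{3}{\lambda},\,4\bigr)$, exactly the claim; convexifiability of $f_t$ plays no role in Part (i).

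For Part (ii) I would start from the same place as the convex proof of Theorem~\ref{thm:ARFHC_general_cr_1}. Lemma~\ref{lemma:ARFHC_general_cr} requires no convexity, so it still holds, and with $\eta=2$ it yields $\frac1w\sum_{h=0}^{w-1} cost\bigl(SFHC(h)\bigr)\le\bigl(1+\tfrac1w\max(\tfrac2\lambda,2)\bigr)\,cost(OPT)$. The only missing ingredient is a replacement for Jensen's inequality: Deterministic SFHC commits the averaged trajectory $\bar x_t=\frac1w\sum_h x_t^{(h)}$, and the convex proof used $C_T(\bar x)\le\frac1w\sum_h C_T(x^{(h)})$, which now fails because the $f_t$ are non-convex. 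The goal is to recover this bound up to the multiplicative factor $1+\tfrac\alpha\lambda$.

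The movement terms cause no trouble, since $c(x,y)=\tfrac12\norm{x-y}_2^2$ is convex, so ordinary Jensen gives $c(\bar x_t,\bar x_{t-1})\le\frac1w\sum_h c(x_t^{(h)},x_{t-1}^{(h)})$ for every $t$. For the hitting terms I would apply Proposition~\ref{prop:Jensen_convexifiable} to the $w$ points $\{x_t^{(h)}\}_h$ with uniform weights $1/w$ (and the convexifier of $f_t$, controlled by $\alpha$), obtaining $f_t(\bar x_t)\le\frac1w\sum_h f_t(x_t^{(h)})+\frac{\alpha}{2w^2}\sum_{h<h'}\norm{x_t^{(h)}-x_t^{(h')}}_2^2$. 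Then I would invoke Condition~I, which for this movement cost reads $f_t(x)\ge\lambda\bigl(c(x,v_t)+c(v_t,x)\bigr)=\lambda\norm{x-v_t}_2^2$, hence $\norm{x_t^{(h)}-v_t}_2^2\le\frac1\lambda f_t(x_t^{(h)})$. Combining this with $\norm{x_t^{(h)}-x_t^{(h')}}_2^2\le 2\norm{x_t^{(h)}-v_t}_2^2+2\norm{x_t^{(h')}-v_t}_2^2$ and the identity $\sum_{h<h'}(a_h+a_{h'})=(w-1)\sum_h a_h$, the error term is at most $\frac{\alpha(w-1)}{\lambda w^2}\sum_h f_t(x_t^{(h)})\le\frac{\alpha}{\lambda w}\sum_h f_t(x_t^{(h)})$, so $f_t(\bar x_t)\le\bigl(1+\tfrac\alpha\lambda\bigr)\frac1w\sum_h f_t(x_t^{(h)})$. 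Summing over $t$, adding the movement bound, and using $1+\tfrac\alpha\lambda\ge 1$ to absorb the movement sum gives $C_T(\bar x)\le\bigl(1+\tfrac\alpha\lambda\bigr)\frac1w\sum_h C_T(x^{(h)})$; chaining with Lemma~\ref{lemma:ARFHC_general_cr} gives the stated competitive ratio.

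The main obstacle is precisely the error term produced by Proposition~\ref{prop:Jensen_convexifiable}: it is quadratic in the pairwise gaps between the subroutine trajectories, and a crude bound would leave an additive term that neither telescopes nor stays proportional to $cost(OPT)$. The key observation --- and the reason Condition~I is exactly the right hypothesis --- is that the order-of-growth condition confines each $x_t^{(h)}$ to a ball around $v_t$ whose squared radius is itself at most $f_t(x_t^{(h)})/\lambda$, which converts the additive error into a \emph{multiplicative} inflation of the hitting cost. A minor technical point is that Proposition~\ref{prop:Jensen_convexifiable} is stated on a bounded convex set; it suffices to apply it on any bounded convex set containing the convex hull of $\{x_t^{(h)}\}_h\cup\{v_t\}$, whose boundedness is guaranteed by the same radius estimate.
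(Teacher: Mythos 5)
Your proposal is correct and follows essentially the same route as the paper: part (i) is Lemma~\ref{lemma:ARFHC_general_cr_w1} with $\eta=2$, and part (ii) chains Lemma~\ref{lemma:ARFHC_general_cr} with a generalized-Jensen bound whose pairwise-distance error term is converted, via Condition~I (which gives $\norm{x_t^{(h)}-v_t}_2^2 \leq f_t(x_t^{(h)})/\lambda$), into a multiplicative $\left(1+\frac{\alpha}{\lambda}\right)$ inflation --- exactly the content of the paper's Lemma~\ref{lem:AFHC_non_convex_1}. The only cosmetic difference is that you apply Proposition~\ref{prop:Jensen_convexifiable} timestep-by-timestep to the hitting costs (with ordinary Jensen for the convex movement cost), whereas the paper applies it once to the full trajectory cost $C_T$ on $\mathbb{R}^{d\times T}$; both yield the same constant, and your variant is slightly more careful about the bounded-domain hypothesis of Proposition~\ref{prop:Jensen_convexifiable}.
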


%\yiheng{Adam, this theorem only works for $\ell_2$ squared movement costs. Could you please think of the right wording of this theorem?}\adam{Just verifying, both results in the theorem require $\ell_2$, right? ...this is the way I've edited it.}\yiheng{Yes, Adam, you are right.}

This theorem represents the first dimension-free competitive ratio for online non-convex optimization with switching costs.  Unlike our results in the convex setting, the result is restricted to squared $\ell_2$ movement costs, for technical reasons (see Lemma \ref{lem:AFHC_non_convex_1}).  Another key contrast with our results in the convex setting is that the competitive ratio does not converge to $1$ as $w\to\infty$.  These two limitations, combined with the fact that the competitive ratio is small only for hitting costs that are ``nearly'' convex, motivate us to consider randomized algorithms in the next section.  Through the use of randomization, we are able to eliminate all these restrictions (see Theorem \ref{thm:nonconvex_rsfhc}).

While the full proof of Theorem \ref{thm:AFHC_non_convex_1} is deferred to Appendix \ref{Appendix:AFHC_non_convex_1}, a key step in the proof is the following lemma, which gives a bound on the loss of the averaging step in the presence of non-convexity and highlights why the limitation to squared $\ell_2$ movement costs is present. Note that, as expected, the loss grows proportionally with the convexifier of the total cost function.  

\begin{lemma}\label{lem:AFHC_non_convex_1}
Consider an online optimization problem where the hitting cost functions satisfy $f_t(x) \geq \lambda \norm{x - v_t}_2^2$ and the movement cost is given by $c(x_t, x_{t-1}) = \frac{1}{2}\norm{x_t - x_{t-1}}_2^2.$
Suppose $\forall t$, $-\alpha$ is a convexifier of $f_t$. We have
$$cost(DSFHC) \leq \frac{1}{w}\left(1 + \frac{\alpha}{\lambda} \right)\sum_{i=1}^w cost\left(SFHC(h)\right).$$
\end{lemma}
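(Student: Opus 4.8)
The plan is to apply the generalized Jensen inequality (Proposition \ref{prop:Jensen_convexifiable}) to the total cost function $C_T$ rather than to the individual cost functions, and then bound the error term that the generalized inequality introduces. First I would observe that under the stated assumptions $C_T: \mathbb{R}^{d\times T} \to \mathbb{R}_{\geq 0}$ is convexifiable: each $f_t(x_t) = \phi_t(x_t) + \frac{-\alpha}{2}\norm{x_t}_2^2$ with $\phi_t$ convex (so $-\alpha$ a convexifier means $f_t(x) + \frac{\alpha}{2}\norm{x}_2^2$ is convex, i.e. $f_t$ is $\alpha$-weakly convex), and each movement term $\frac{1}{2}\norm{x_t - x_{t-1}}_2^2$ is already convex. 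Summing over $t$, one sees that $C_T(x) + \frac{\alpha}{2}\sum_{t=1}^T \norm{x_t}_2^2$ is convex, so $C_T$ has convexifier $-\alpha$ as a function on $\mathbb{R}^{d\times T}$ (with the Euclidean norm being the flattened one). I would then invoke Proposition \ref{prop:Jensen_convexifiable} with $p = w$, the $w$ trajectories $x^{(0)}, \ldots, x^{(w-1)}$ produced by the subroutines, and uniform weights $\lambda_h = 1/w$, yielding
\begin{equation*}
cost(DSFHC) = C_T\!\left(\tfrac{1}{w}\sum_{h} x^{(h)}\right) \leq \frac{1}{w}\sum_{h=0}^{w-1} C_T(x^{(h)}) + \frac{\alpha}{2}\cdot\frac{1}{w^2}\sum_{h < h'} \norm{x^{(h)} - x^{(h')}}_2^2 .
\end{equation*}

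The crux is then to control the pairwise dispersion term $\frac{\alpha}{2w^2}\sum_{h<h'}\norm{x^{(h)}-x^{(h')}}_2^2$ by the average subroutine cost. The natural route is the variance-style identity $\frac{1}{w^2}\sum_{h<h'}\norm{x^{(h)}-x^{(h')}}_2^2 = \frac{1}{w}\sum_h \norm{x^{(h)} - \bar x}_2^2 \leq \frac{1}{w}\sum_h \sum_{t} \norm{x_t^{(h)} - v_t}_2^2$ (where $\bar x$ is the average, and the last inequality holds because $\bar x$ minimizes the sum-of-squared-distances, but one should instead center at the trajectory $v = (v_1,\ldots,v_T)$ of per-step minimizers to make the next step work). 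Now the Order of Growth hypothesis in the specialized form $f_t(x) \geq \lambda\norm{x - v_t}_2^2$ gives $\sum_t \norm{x_t^{(h)} - v_t}_2^2 \leq \frac{1}{\lambda}\sum_t f_t(x_t^{(h)}) \leq \frac{1}{\lambda} C_T(x^{(h)})$, so the dispersion term is at most $\frac{\alpha}{2\lambda}\cdot\frac{1}{w}\sum_h C_T(x^{(h)})$. I would need to double-check the constant: pairing the $\frac{\alpha}{2}$ from Jensen with the variance identity should produce exactly $\frac{\alpha}{2\lambda}$, but the lemma claims a cleaner factor $\frac{\alpha}{\lambda}$, so either the centering argument loses a factor of $2$ somewhere or the bound is simply stated with room to spare — I would track this carefully and, if needed, use the cruder bound $\norm{x^{(h)}-x^{(h')}}_2^2 \leq 2\norm{x^{(h)}-v}_2^2 + 2\norm{x^{(h')}-v}_2^2$ which directly yields the stated $\frac{\alpha}{\lambda}$ constant without needing the optimal-centering fact.

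Combining the two displays gives $cost(DSFHC) \leq \left(1 + \frac{\alpha}{\lambda}\right)\cdot\frac{1}{w}\sum_h C_T(x^{(h)}) = \left(1 + \frac{\alpha}{\lambda}\right)\cdot\frac{1}{w}\sum_h cost(SFHC(h))$, which is exactly the claim. The main obstacle I anticipate is the bookkeeping around \emph{why $C_T$ is convexifiable with the right convexifier and over what domain} — Proposition \ref{prop:Jensen_convexifiable} is stated for functions on a \emph{bounded} convex set, whereas $C_T$ is defined on all of $\mathbb{R}^{d\times T}$, so one must argue that the relevant points $x^{(h)}$ all lie in some bounded convex set (e.g. a large ball containing all $v_t$ and all subroutine outputs, which are bounded since each subroutine minimizes a coercive cost) and that restricting $C_T$ to that ball preserves the convexifier. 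The sign convention also needs care: ``$-\alpha$ is a convexifier of $f_t$'' with $\alpha > 0$ means $f_t + \frac{\alpha}{2}\norm{\cdot}_2^2$ is convex, so in Proposition \ref{prop:Jensen_convexifiable} the convexifier of $C_T$ is $-\alpha$ and the error term $-\frac{\alpha_{C_T}}{2}(\cdots)$ becomes $+\frac{\alpha}{2}(\cdots)$, i.e. it genuinely works \emph{against} us, consistent with the inflation factor $1 + \alpha/\lambda$. Everything else — the variance identity, the application of Condition I — is routine.
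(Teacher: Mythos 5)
Your proposal is correct and takes essentially the same route as the paper's proof: apply the generalized Jensen inequality (Proposition \ref{prop:Jensen_convexifiable}) to the total cost $C_T$ with convexifier $-\alpha$, then bound the pairwise dispersion term by centering at $v=(v_1,\dots,v_T)$ via $\norm{x^{(h)}-x^{(h')}}_2^2 \le 2\norm{x^{(h)}-v}_2^2 + 2\norm{x^{(h')}-v}_2^2$ and invoking $f_t(x)\ge\lambda\norm{x-v_t}_2^2$, which yields exactly the stated factor $1+\frac{\alpha}{\lambda}$. Your variance-identity variant would in fact give the sharper constant $1+\frac{\alpha}{2\lambda}$ (so your worry about the factor of $2$ resolves in your favor), and your remark about restricting to a bounded convex set containing the finitely many trajectories is a harmless extra precaution that the paper's proof glosses over.
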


\begin{proof}[Proof of Lemma \ref{lem:AFHC_non_convex_1}]
Throughout the proof, we use $x^h \in \mathbb{R}^{d\times T}$ to denote the point sequence picked by $SFHC(h)$ and use $x \in \mathbb{R}^{d\times T}$ to denote the point sequence picked by $DSFHC$. Applying Jensen's inequality for convexifiable functions (Proposition \ref{prop:Jensen_convexifiable}), we see that the cost incurred by DSFHC can be upper bounded by the average of $\{cost\left(SFHC(h)\right), h = 0, \cdots w-1\}$ plus an additive term related to the distance between each pair of SFHCs. Specifically, we have that
\begin{equation}\label{thm:AFHC_non_convex_1:eq1}
    C_T(x) \leq \frac{1}{w}\sum_{h=1}^w C_T(x^h) + \frac{\alpha}{4}\sum_{i, j = 1}^w \frac{1}{w^2}\norm{x^i - x^j}_2^2.
\end{equation}
Using $v = (v_1, v_2, \cdots, v_T)$ to denote the sequence of minimizers, we can bound the additive term by
\begin{subequations}\label{thm:AFHC_non_convex_1:eq2}
    \begin{align}
    \sum_{i, j = 1}^w \frac{1}{w^2}\norm{x^i - x^j}_2^2
    &\leq \frac{1}{w^2}\sum_{i, j = 1}^w \left(\norm{x^i - v} + \norm{x^j - v} \right)^2\label{thm:AFHC_non_convex_1:eq2:s1}\\
    &\leq \frac{2}{w^2}\sum_{i, j = 1}^w \left(\norm{x^i - v}^2 + \norm{x^j - v}^2 \right)\label{thm:AFHC_non_convex_1:eq2:s2}\\
    &={} \frac{4}{w} \sum_{i=1}^w \norm{x^i - v}^2\nonumber\\
    &\leq{} \frac{4}{\lambda w} \sum_{i=1}^w C_T(x^i),\label{thm:AFHC_non_convex_1:eq2:s3}
    \end{align}
\end{subequations}
where  we use the triangle inequality in \eqref{thm:AFHC_non_convex_1:eq2:s1} and we use the generalized mean inequality in \eqref{thm:AFHC_non_convex_1:eq2:s2}. Substituting \eqref{thm:AFHC_non_convex_1:eq2} into \eqref{thm:AFHC_non_convex_1:eq1}, we see that $$C_T(x) \leq \frac{1}{w}\left(1 + \frac{\alpha}{\lambda} \right)\sum_{h=1}^w C_T(x^h),$$ which completes the proof.
\end{proof}

\subsection{Randomized SFHC}\label{subsec:rand}

While it is known that randomization cannot lead to better competitive ratio in the case of convex hitting costs \cite{bansal20152}, we use randomization in the non-convex case to derive an improved competitive ratio. Specifically, we use a simple but powerful idea to directly extend the analysis in the convex setting to the non-convex case.  In order to bypass the challenge of applying Jensen's inequality, in Randomized SFHC (see Algorithm \ref{alg:RRFHC}) we use randomness to pick a subroutine $h$ uniformly at random from the set $\{0, 1, \cdots, w - 1\}$ and then run $SFHC(h)$. This simple idea works very well, and with a small modification of the argument in the convex case, we obtain the following theorem.

\begin{theorem} \label{thm:nonconvex_rsfhc}
Consider an online optimization problem where the movement and hitting costs satisfy Conditions I and II. 
\begin{enumerate}[(i)]
\item Randomized SFHC (Version A) has a competitive ratio of  $$\max \left(1 + \frac{\eta + \eta^2}{2\lambda}, \eta^2\right)cost(OPT),$$ given $w = 1$ predictions.
\item Randomized SFHC (Version A) has an expected cost that is bounded in terms of the offline optimal as follows $$\mathbb{E}\left(cost(ALG)\right) \leq \left( 1 + \frac{1}{w}\max\left(\frac{\eta}{\lambda}, 2(\eta - 1) \right)\right) cost(OPT),$$
given a oblivious adversary and $w \geq 2$ predictions.
\end{enumerate}
\end{theorem}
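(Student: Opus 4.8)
The plan is to lean on Lemmas \ref{lemma:ARFHC_general_cr_w1} and \ref{lemma:ARFHC_general_cr}, both of which were established \emph{without} ever using convexity of the hitting or movement costs. Indeed, the only point at which convexity entered the proof of Theorem \ref{thm:ARFHC_general_cr_1} was the closing application of Jensen's inequality, used to pass from the average cost of the $w$ subroutines to the cost of their average, $cost(DSFHC) \le \frac{1}{w}\sum_{h=0}^{w-1} cost(SFHC(h))$. Randomized SFHC (Version A, Algorithm \ref{alg:RRFHC}) is designed precisely so that this step is unnecessary: since it draws $h$ uniformly from $\{0,\dots,w-1\}$ once before the run and then executes $SFHC(h)$, on any \emph{fixed} instance its cost is a random variable whose expectation equals the arithmetic mean $\frac{1}{w}\sum_{h=0}^{w-1} cost(SFHC(h))$ \emph{exactly}, with no inequality and no structural hypothesis.

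For part (i), observe that when $w=1$ the only subroutine is $SFHC(0)$, which greedily picks $x_t = v_t$, and Randomized SFHC coincides with it (as does Deterministic SFHC). Part (i) is therefore immediate from Lemma \ref{lemma:ARFHC_general_cr_w1}, whose proof makes no convexity assumption.

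For part (ii), fix the instance $f_1,\dots,f_T$ --- this is the one place where the oblivious adversary assumption is used, since we need the sequence of cost functions, and hence each of the numbers $cost(SFHC(h))$ and $cost(OPT)$, to be independent of the randomly sampled phase $h$. Taking expectation over the uniform choice of $h$ and invoking Lemma \ref{lemma:ARFHC_general_cr},
\begin{align*}
\mathbb{E}\big(cost(ALG)\big) &= \frac{1}{w}\sum_{h=0}^{w-1} cost\big(SFHC(h)\big)\\
&\le \left( 1 + \frac{1}{w}\max\left(\frac{\eta}{\lambda}, 2(\eta - 1) \right)\right) cost(OPT),
\end{align*}
which is exactly the claimed bound.

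The analysis is thus almost pure bookkeeping; the content lies in the algorithm design. The ``hard part'' is conceptual: recognizing that swapping averaging for uniform sampling bypasses Jensen's inequality, at the price of bounding only the \emph{expected} cost rather than giving a deterministic worst-case guarantee. The subtlety to handle carefully is the interchange of expectation with the sum over phases, which is legitimate only against an oblivious adversary; against an adaptive adversary the functions $f_t$ --- and therefore $cost(SFHC(h))$ and $cost(OPT)$ --- could be correlated with the realized $h$, breaking the identity above and requiring the additional machinery behind Theorem \ref{thm:RRFHC_general_cr_1}.
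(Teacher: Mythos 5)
Your proposal is correct and follows essentially the same route as the paper: the case $w=1$ is dispatched by Lemma \ref{lemma:ARFHC_general_cr_w1}, and for $w\geq 2$ the oblivious adversary lets you fix the instance so that the expected cost of Randomized SFHC equals the arithmetic mean of the subroutine costs, after which Lemma \ref{lemma:ARFHC_general_cr} gives the bound. Your added remarks on why convexity is never needed and why obliviousness is essential match the paper's discussion as well.
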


Note that the case of $w=1$ follows immediately from Lemma \ref{lemma:ARFHC_general_cr_w1}, and in this case there is no randomization in the algorithm.  The case of $w>1$, on the other hand, requires randomness.  Our result in this case considers an oblivious adversary, which means that the adversary must specify the sequence of hitting cost functions independently from the algorithm's decisions, i.e., without knowing the random sample of the algorithm.  This is a common adversarial model in online algorithm, e.g., see \cite{audibert2010regret, bubeck2012regret}, and captures the fact that in real-world applications the environment is not able to react to the actions taken by the algorithm.  Note that the expectation in the theorem is taken over the randomness in Randomized SFHC (Version A) and that optimal sequence and cost are not impacted by the randomness of the algorithm.

%Notice that Deterministic SFHC with $w = 1$ can achieve the competitive ratio of $\max\{1 + \frac{\eta + \eta^2}{2\lambda}, \eta^2\}$ under Conditions I and II without the convexity assumption on hitting cost functions and movement cost functions. To prove this, one can use the same proof as in Case (i) of the proof of Theorem \ref{thm:ARFHC_general_cr_1}.

The theorem highlights that the expected cost incurred by Randomized SFHC satisfies the same upper bound given in Theorem \ref{thm:ARFHC_general_cr_1}, without the requirement that the hitting costs to be convex. Further, Randomized SFHC avoids the requirement that the hitting cost function be nearly convex and so applies more broadly than Deterministic SFHC. 

\begin{proof}  
Suppose the hitting cost sequence is fixed by the adversary before the learner begins.  Then, since the subroutine $h$ is picked uniform randomly from the set $\{0, 1, \cdots, w-1\}$, we have that
\begin{equation}
    \mathbb{E}\left(cost(RSFHC)\right) = \frac{1}{w}\sum_{h = 1}^{w} cost(x^h).
\end{equation}
Given this, the theorem follows directly from Lemma \ref{lemma:ARFHC_general_cr}.
\end{proof}

Theorem \ref{thm:nonconvex_rsfhc} is already a strong result about the performance of Randomized SFHC; however, it is interesting to understand whether it is possible to strengthen the result further in order to consider a stronger adversary. The proof highlights that the sequence of hitting costs must be fixed before the algorithm makes its random choice since, otherwise, after learning the subroutine $h$ that is picked by the algorithm, the adversary could design the future hitting cost sequence so that $\sum_{t \in \Omega_h} (H_t^* + M_t^*) >> \frac{1}{w}\sum_{t=1}^T (H_t^* + M_t^*)$. In this case, the expected competitive ratio upper bound would be considerably worse.

For practical settings, it is natural to focus on the performance of an online learner with respect to an oblivious adversary; however it is interesting from a theoretical perspective to understand if it is possible for an online learner to still perform well against a non-oblivious adversary.  In order for Randomized SFHC to perform well against a non-oblivious adversary, it needs to incorporate more randomness.  Instead of only making one random choice at the beginning of the instance, randomness needs to be used throughout the instance. 

We adjust Randomized SFHC along those lines in (Algorithm \ref{alg:RRFHC_v2}).  In Version B of the algorithm, we use randomness to choose the timesteps at which we ``synchronize'', i.e., set $x_t = v_t$. By ensuring that the expected distance between two such timesteps is lower bounded by $\Omega(W)$, we can maintain a good competitive ratio and still be robust against a non-oblivious adversary.  This enables us to obtain the following result.  We state the result briefly here and present it formally in Appendix \ref{Appendix:Online_Adv}.  In particular, we defer a detailed explanation about the non-oblivious adversary we consider to the appendix and note here only that it is a \emph{semi-adaptive} adversary.  Informally, the semi-adaptive adversary can design the hitting cost function $f_t$ adaptively based on the algorithm's decision before timestep $t$. However, it must commit its decision point $x_t^*$ as soon as $f_t$ is revealed. %The strongest adaptive adversary should be able to pick its solution $x^*$ after all hitting cost functions have been decided. Proving result for the strongest adaptive adversary remains a topic of our ongoing research. 
This adversarial model is the analog of the notion of pseudo-regret for adversarial bandits studied in \cite{bubeck2012regret}.
%, where the adaptive adversary is required to pick the arm according to their reward distributions without knowing the actual outcomes.

\begin{algorithm}[t]
\caption{Randomized SFHC (Version B)}\label{alg:RRFHC_v2}
\begin{algorithmic}[1]
\State Generate a sequence of timesteps $R = \{t_0, t_1, \cdots \}$ such that $t_0 = 0, t_{i+1} = t_i + Y_i$. $\{Y_i\}$ are i.i.d uniform randomly picked in $\{n \in \mathbb{Z} \mid \frac{w}{2} < n \leq w - 1\}$. 
\State Pick 
\begin{equation*}
    \begin{aligned}
    &\argmin_x \sum_{\tau=1}^T f_\tau(x_\tau) + c(x_\tau, x_{\tau-1})\\
    &\text{Subject to }x_\tau = v_\tau, \forall \tau \in R.
    \end{aligned}
\end{equation*}
\end{algorithmic}
\end{algorithm}

\begin{theorem}\label{thm:RRFHC_general_cr_1}
Consider an online optimization problem where the movement cost and hitting costs satisfy Conditions I and II.   Randomized SFHC (Version B) has an expected cost that is bounded in terms of the offline optimal as follows: 
$$\mathbb{E}\left(cost(ALG)\right) \leq \left( 1 + \frac{2}{w - 2}\max\left(\frac{\eta}{\lambda}, 2(\eta - 1) \right)\right) \mathbb{E}\left(cost(ADV)\right),$$ 
given a semi-adaptive adversary and $w \geq 4$ predictions.
\end{theorem}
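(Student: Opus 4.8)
The plan is to replay the proof of Lemma~\ref{lemma:ARFHC_general_cr} with the random synchronization set $R$ playing the role of the residue class $\Omega_h$, and then to take expectations; the whole difficulty is to control how often $R$ lands on a timestep whose offline cost the semi-adaptive adversary has managed to inflate.

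\textbf{Pathwise comparison.} Fix a realization of $R=\{t_0=0,t_1,\dots\}$ together with the resulting (possibly $R$-dependent) adversary play, and let $x^{*}$ be its committed trajectory, $H_t^{*}=f_t(x_t^{*})$, $M_t^{*}=c(x_t^{*},x_{t-1}^{*})$. Version~B outputs the minimizer of $C_T$ subject to $x_\tau=v_\tau$ for $\tau\in R$, so substituting $v_\tau$ for $x_\tau^{*}$ at every $\tau\in R$ in $x^{*}$ produces a feasible competitor, and the segment-by-segment argument of \eqref{thm:RAFHC_general_cr_1:e1_2}--\eqref{thm:RAFHC_general_cr_1:e1_7} (which uses only Conditions~I and~II, applied between consecutive elements of $R$ exactly as between consecutive elements of $\Omega_h$) yields \[ cost(ALG)\ \le\ C_T(x^{*})+\frac{\eta}{\lambda}\sum_{s\in R}H_s^{*}+(\eta-1)\sum_{s\in R}\big(M_s^{*}+M_{s+1}^{*}\big), \] with the convention $M_{T+1}^{*}=0$; the first segment and a possibly-overshooting last segment are handled as in Algorithm~\ref{alg:SFHC(h)} and only delete terms from the right-hand side.

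\textbf{Reduction to a hitting probability.} Take expectations over $R$ (and the adversary's randomness). Writing $\sum_{s\in R}(M_s^{*}+M_{s+1}^{*})=\sum_t M_t^{*}\big(\mathbf{1}[t\in R]+\mathbf{1}[t-1\in R]\big)$, it suffices to show, for every $t$, that $\mathbb{E}[H_t^{*}\mathbf{1}[t\in R]]\le\frac{2}{w-2}\mathbb{E}[H_t^{*}]$ and $\mathbb{E}[M_t^{*}\mathbf{1}[t'\in R]]\le\frac{2}{w-2}\mathbb{E}[M_t^{*}]$ for $t'\in\{t-1,t\}$: these give $\mathbb{E}(cost(ALG))\le\mathbb{E}(cost(ADV))+\frac{2}{w-2}\big(\frac{\eta}{\lambda}\,\mathbb{E}\textstyle\sum_t H_t^{*}+2(\eta-1)\,\mathbb{E}\sum_t M_t^{*}\big)$, and bounding both coefficients by $\max(\frac{\eta}{\lambda},2(\eta-1))$ produces the claimed ratio. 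Because $H_t^{*}$ and $M_t^{*}$ are determined by the adversary's information $\mathcal{H}_t$ at the instant it commits $x_t^{*}$ (having already committed $x_{t-1}^{*}$), it is enough to prove $\Pr[t'\in R\mid\mathcal{H}_t]\le\frac{2}{w-2}$ for $t'\in\{t-1,t\}$.

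\textbf{Bounding the hitting probability (the hard part).} Here the $w$-step prediction lag is essential: since Version~B is implemented with a window of $w$, the decision $x_j$ is formed only after $f_1,\dots,f_{j+w-1}$, so when the adversary commits $x_t^{*}$ it has observed only $x_1,\dots,x_{t-w}$. Given the (adversary-chosen) cost functions, each $x_j$ is a deterministic function of $R$ that depends on $R$ only through the two sync points bracketing $j$ (the optimization decouples across sync points), and for $j\le t-w$ the later bracket is at most $t-2$; hence $\mathcal{H}_t$ pins down the sync points only up to some $t_m\le t-2$, while the next gap $Y$ (with $t_{m+1}=t_m+Y$) is fresh -- independent of $\mathcal{H}_t$ and uniform on $\{n\in\mathbb{Z}:\frac{w}{2}<n\le w-1\}$, a set of size at least $\frac{w-2}{2}$. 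Since any two gaps sum to more than $w-1\ge t-t_m$, at most one further sync point can fall in $(t_m,t]$, so conditionally on $\mathcal{H}_t$ the events $\{t\in R\}$ and $\{t-1\in R\}$ are $\{Y=t-t_m\}$ and $\{Y=t-1-t_m\}$ (or empty), each of probability at most $\frac{2}{w-2}$. Plugging this into the previous step completes the proof. The points requiring care in the full write-up are the exact definition of $\mathcal{H}_t$ under the semi-adaptive model, and checking — in the cases $t-w\in R$ and $t-w\notin R$ separately — that the gap governing $\{t-1\in R\}$ is still fresh at step $t$; the hypothesis $w\ge4$ is exactly what keeps the support of $Y$ nonempty.
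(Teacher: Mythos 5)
Your proposal is correct and follows essentially the same route as the paper: your pathwise comparison is exactly Lemma \ref{thm:RRFHC_general_cr_1:lemma_1}, and your hitting-probability bound (the next synchronization gap is fresh given the adversary's information $z_{1:t-w}$, at most one sync point can land near $t$, and each candidate value has probability at most $2/(w-2)$) is exactly Lemma \ref{thm:RRFHC_general_cr_1:lemma_2}. The only differences are bookkeeping — you split $\sum_{s\in R}(M_s^*+M_{s+1}^*)$ by per-timestep indicators and condition at each commitment time, whereas the paper groups the extra cost per sync point and conditions on $Z_{1:\tau-w+1}$ — and your stopping-time justification of which sync points $z_{1:t-w}$ can reveal is, if anything, a slightly more careful rendering of the paper's argument.
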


\noindent Note that this theorem requires $w\geq 4$ for technical reasons related to the timing of the randomness injected by the algorithm.
\section{Concluding Remarks}
%\yiheng{Summarize the contributions first.}

%Our future directions include:
%\begin{enumerate}
 %   \item Extending the current prediction model to the more general one as considered in \cite{chen2015online, chen2016using}.
  %  \item Generalizing the online adversary in Theorem \ref{thm:RRFHC_general_cr_1} to the real offline optimal. (Recall that the difference between the online adversary and the offline optimal is the time when they must commit their decision point $x_t^*$ at timestep $t$. While the online adversary must commit at timestep $t$, the offline optimal is allowed to commit its decisions after the whole hitting cost functions sequence is revealed.)
%\end{enumerate}

In this paper we have studied the problem of online optimization with movement costs and, for the first time, provided algorithms with provable guarantees for the case when the hitting costs are non-convex. More specifically, we presented two simple conditions on the hitting and movement costs that are sufficient to guarantee the existence of a competitive algorithm and general enough to capture most previously studied settings.  These conditions do not rely on convexity. We also presented a novel algorithm, Synchronized Fixed Horizon Control (SFHC), and showed that it has a constant, dimension-free competitive ratio even in the non-convex setting. This marks the first time a competitive algorithm for online optimization with movement costs and non-convex costs has appeared in the literature. Further, SFHC is the first algorithm that both provides a constant dimension-free competitive ratio when given no predictions \textit{and} leverages predictions to achieve a competitive ratio that converges to 1 as $w\to\infty$. 

There are several interesting directions for future work motivated by the results here. First, our results focus on the case when predictions are perfect and, in practice, predictions are rarely perfect.  In the case of SOCO, there have recently emerged some results for settings with noisy predictions \cite{chen2015online, chen2016using}, and it will be interesting to understand if it is possible to extend such results to non-convex settings.  Another important direction where there has been success in the convex setting is in the study of distributed algorithms for online optimization, e.g.,  the two-timescale model introduced in \cite{goel2017thinking}.  Again, extending such results to the non-convex setting is an important, and likely challenging, task.  Finally, while we use randomization in SFHC in the non-convex setting, it is not yet clear whether this is necessary.  In particular, in the convex setting it is known that randomization is not needed to achieve the optimal competitive ratio \cite{bansal20152}, but in the non-convex setting no such result exists.  Understanding lower bounds on what is achievable via deterministic and randomized algorithms in the non-convex setting remains open. 
%\section*{Acknowledgments}
%This work was supported by NSF grants AitF-1637598 and CNS-1518941, with additional support for Gautam Goel provided by an Amazon AWS AI Fellowship.
%%
%% The next two lines define the bibliography style to be used, and
%% the bibliography file.
\bibliographystyle{ACM-Reference-Format}
\bibliography{main}

%%
%% If your work has an appendix, this is the place to put it.
\appendix

\section{Preliminaries}\label{Appendix:Pre}

The appendices that follow provide proofs of the results in the body of the paper.  Throughout the proofs in the appendix we use the following notation to denote the hitting and movement costs of the online learner: $H_t = f_t(x_t)$ and $M_t = c(x_t, x_{t-1})$, where $x_t$ is the point picked by the algorithm at timestep $t$.
Similarly, we denote the hitting and movement costs of the offline optimal (or offline/online adversary) as $H_t^* = f_t(x_t^*)$ and $M_t^* = c(x_t^*, x_{t-1}^*)$, where $x_t^*$ is the point picked by the offline optimal at timestep $t$. For convenience, we use $x_{\tau:t}$ to denote a sequence of points (or random variables) $x_\tau, x_{\tau + 1}, \cdots, x_t$. 

%Additionally, before moving to the proofs, we summarize a few standard definitions that are used throughout the appendix. 

%\begin{definition}
%A function $f: \mathcal{X} \to \mathbb{R}$ is $\alpha$-strongly convex with respect to a norm $\norm{\cdot}$ if for all $x, y$ in the relative interior of the domain of $f$ and $\lambda \in (0, 1)$, we have \[f(\lambda x + (1 - \lambda)y) \leq \lambda f(x) + (1 - \lambda)f(y) - \frac{\alpha}{2}\lambda (1 - \lambda)\norm{x - y}^2.\]
%\end{definition}

%\begin{definition}
%A function $f: \mathcal{X} \to \mathbb{R}$ is $\beta$-strongly smooth with respect to a norm $\norm{\cdot}$ if $f$ is everywhere differentiable and if for all $x, y$ we have
%\[f(y) \leq f(x) + \langle \nabla f(x), y - x \rangle + \frac{\beta}{2}\norm{y - x}^2.\]
%\end{definition}

\section{Proof of Proposition \ref{prop:Reduce_SOCO_to_CBC}}
%\begin{proof}[Proof of Proposition \ref{prop:Reduce_SOCO_to_CBC}]

\begin{figure}
\begin{center}
    \includegraphics{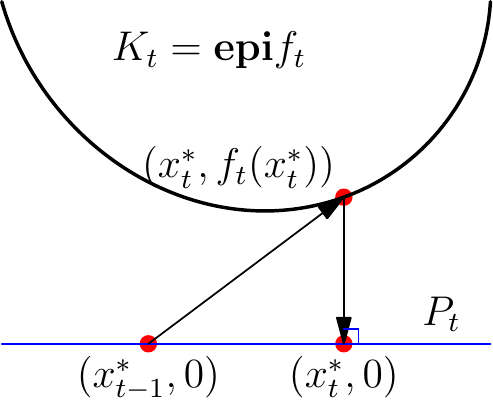}
\end{center}
\caption{\emph{\textbf{An illustration for the proof of Lemma \ref{lemma:ratio_CBC_SOCO_opt}.} Suppose the offline optimal of SOCO picks $x_t^* \in \mathbb{R}^d$ at timestep $t$. We instruct the offline CBC player to pick $(x_t^*, f_t(x_t^*)) \in K_t$ and $(x_t, 0) \in P_t$. Recall that $K_t$ is the epigraph of $f_t$ and $P_t$ is the hyperplane $\{(x, 0)\mid x \in \mathbb{R}^d\}$.}}
\label{figure:CBC_Reduction_OPT}
\end{figure}
\begin{figure}
\begin{center}
    \includegraphics{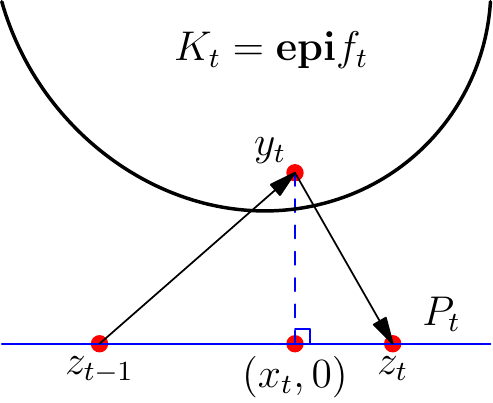}
\end{center}
\caption{\emph{\textbf{An illustration for the proof of Lemma \ref{lemma:ratio_CBC_SOCO_alg}.} Suppose the online CBC Algorithm picks $y_t \in K_t$ at timestep $2t - 1$ and $z_t \in P_t$ at timestep $2t$. In the reduction, we instruct the online SOCO Algorithm to pick $x_t$ which is built by the first $d$ elements of $y_t$. As shown in the figure, $(x_t, 0)$ is the projection of $y_t$ on $P_t$. Recall that $K_t$ is the epigraph of $f_t$ and $P_t$ is the hyperplane $\{(x, 0)\mid x \in \mathbb{R}^d\}$.}}
\label{figure:CBC_Reduction_ALG}
\end{figure}

To begin, consider the SOCO problem.  Suppose the starting point is $x_0 \in \mathbb{R}^d$ and the hitting cost function at timestep $t$ is $f_t$. In the reduction from SOCO to CBC, we convert each hitting cost function $f_t$ to its epigraph $K_t \subseteq \mathbb{R}^{d+1}$ Recall that for a non-negative convex function $f: \mathbb{R}^d\to \mathbb{R}_{\geq 0}$, the epigraph of $f$ is defined to be the convex set
$\mathbf{epi} f = \{(x, y)\mid x \in \mathbf{dom} f, y \geq f(x)\}$. 

We pick the starting point at $z_0 = (x_0, 0)$ and instruct the CBC algorithm $A$ to chase a sequence of $2T$ convex sets and hyperplanes in this order: $K_1, P_1, K_2, P_2, \cdots, K_T, P_T$, where the hyperplane $P_t = \{(x, 0)\mid x \in \mathbb{R}^d\}, \forall t \in [T]$. %\gautam{Seems like all $P_t$ are the same, so we can drop the subscript, right?}\yiheng{Yes, but I prefer keeping the subscripts because it is more convenient for the rest of the proof. Consider the definition of $z_t$ for example.} 
Suppose the point sequence picked by $A$ is $y_1, z_1, y_2, z_2, \cdots, y_t, z_T$, where $y_t \in K_t, z_t \in P_t, \forall t \in [T]$. After $A$ commits its solution $y_t$ in body $K_t$, our SOCO algorithm $A'$ picks the first $d$ elements in $y_t$ as the solution $x_t$ of SOCO at timestep $t$.

%Recall that for a non-negative convex function $f: \mathbb{R}^d\to \mathbb{R}^+ \cup \{0\}$, the epigraph of $f$ is defined as
%$$\mathbf{epi} f = \{(x, t)\mid x \in \mathbf{dom} f, t \geq f(x)\}.$$
%Since the hitting cost function $f_t$ is convex, we know that $\mathbf{epi} f_t$ is convex.

For convenience, we use the $OPT_{CBC}$ to denote the offline optimal cost of chasing the body sequence $K_1, P, K_2, P, \cdots, K_T, P$, i.e.
\begin{equation}\label{thm:Reduce_SOCO_to_CBC:e1}
    \begin{aligned}
    OPT_{CBC} =& \min_{y_{1:T}^*, z_{1:T}^*}\sum_{t=1}^T \left(\norm{y_t^* - z_{t-1}^*} + \norm{z_t^* - y_t^*}\right)\\
    &\text{subject to }y_t^* \in K_t, z_t^* \in P_t, \forall t \in [T], z_0^* = z_0.
    \end{aligned}
\end{equation}
In the same problem setting, we use $ALG_{CBC}$ to denote the total cost incurred by Algorithm $A$, i.e.
\begin{equation}\label{thm:Reduce_SOCO_to_CBC:e2}
    ALG_{CBC} = \sum_{t=1}^T \left(\norm{y_t - z_{t-1}} + \norm{z_t - y_t}\right).
\end{equation}

We define $OPT_{SOCO}$ to be the offline optimal cost of SOCO problem where the hitting cost function sequence is given by $f_1, f_2, \cdots, f_t$, i.e.
\begin{equation}\label{thm:Reduce_SOCO_to_CBC:e3}
    OPT_{SOCO} = \min_{x_{1:T}^*}\sum_{t=1}^T \left(f_t(x_t^*) + \norm{x_t^* - x_{t-1}^*}\right).
\end{equation}
In the same problem setting, we use $ALG_{SOCO}$ to denote the total cost incurred by Algorithm $A'$, i.e.
\begin{equation}\label{thm:Reduce_SOCO_to_CBC:e4}
    ALG_{SOCO} = \sum_{t=1}^T \left(f_t(x_t) + \norm{x_t - x_{t-1}}\right).
\end{equation}

The proof follows immediately from two lemmas. We state the lemmas and conclude the proof of Proposition \ref{prop:Reduce_SOCO_to_CBC} before proving the lemmas.

\begin{lemma}\label{lemma:ratio_CBC_SOCO_opt}
In the reduction, the optimal cost of the CBC problem is less than or equal to $2$ times the optimal cost of the SOCO problem, i.e.
$$OPT_{CBC} \leq 2\cdot OPT_{SOCO},$$
where $OPT_{CBC}, OPT_{SOCO}$ are defined in equations \eqref{thm:Reduce_SOCO_to_CBC:e1} and \eqref{thm:Reduce_SOCO_to_CBC:e3}.
\end{lemma}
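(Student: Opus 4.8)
The plan is to exhibit, directly from an offline‑optimal SOCO trajectory, a \emph{feasible} trajectory for the CBC instance whose cost is at most $2\cdot OPT_{SOCO}$; since $OPT_{CBC}$ (as defined in \eqref{thm:Reduce_SOCO_to_CBC:e1}) is the minimum of the CBC cost over all feasible trajectories, this immediately gives the lemma. Concretely, let $x_1^*,\dots,x_T^*$ attain $OPT_{SOCO}$ in \eqref{thm:Reduce_SOCO_to_CBC:e3}, with $x_0^* = x_0$ the fixed start, and set $y_t^* := (x_t^*,\, f_t(x_t^*))$ and $z_t^* := (x_t^*,\, 0)$ for $t=1,\dots,T$, together with $z_0^* = (x_0,0) = z_0$. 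This is exactly the candidate solution depicted in Figure~\ref{figure:CBC_Reduction_OPT}.

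First I would verify feasibility: $y_t^* \in K_t$ because $K_t = \mathbf{epi}\, f_t$ and $y_t^*$ lies on the graph of $f_t$ (its last coordinate equals $f_t(x_t^*)$), while $z_t^* \in P_t$ since its last coordinate is $0$; the start constraint $z_0^* = z_0$ holds by construction. Then I would bound the cost leg by leg using the triangle inequality for $\norm{\cdot}_p$ on $\mathbb{R}^{d+1}$, decomposing each displacement into a ``horizontal'' part (the first $d$ coordinates) and a ``vertical'' part (the last coordinate). For the upward leg, $\norm{y_t^* - z_{t-1}^*} = \norm{(x_t^*-x_{t-1}^*,\; f_t(x_t^*))} \le \norm{x_t^* - x_{t-1}^*} + f_t(x_t^*)$, using $f_t\ge 0$; for the downward leg, $\norm{z_t^* - y_t^*} = \norm{(0,\; -f_t(x_t^*))} = f_t(x_t^*)$.

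Summing over $t$ then yields
\[
\begin{aligned}
OPT_{CBC} &\le \sum_{t=1}^T \big(\norm{y_t^* - z_{t-1}^*} + \norm{z_t^* - y_t^*}\big)\\
&\le \sum_{t=1}^T \norm{x_t^* - x_{t-1}^*} + 2\sum_{t=1}^T f_t(x_t^*) \;\le\; 2\cdot OPT_{SOCO},
\end{aligned}
\]
where the last inequality uses $OPT_{SOCO} = \sum_t f_t(x_t^*) + \sum_t \norm{x_t^* - x_{t-1}^*}$ together with nonnegativity of the movement terms (doubling $\sum_t \norm{x_t^*-x_{t-1}^*}$ only weakens the bound). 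There is no real obstacle here; the only points that need a little care are (i) the inequality $\norm{(a,b)}_p \le \norm{a}_p + |b|$, which is simply the triangle inequality applied to the decomposition $(a,b) = (a,0)+(0,b)$ in $\mathbb{R}^{d+1}$, and (ii) checking that the fixed CBC start $z_0$ is consistent with the fixed SOCO start $x_0$. The factor $2$ is intrinsic to this construction: the CBC player must ascend to the epigraph to ``pay'' the hitting cost $f_t(x_t^*)$ and then descend back to the hyperplane $P_t$, thereby incurring $f_t(x_t^*)$ twice.
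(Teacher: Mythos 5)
Your proposal is correct and follows essentially the same route as the paper: both construct the feasible CBC trajectory $y_t = (x_t^*, f_t(x_t^*)) \in K_t$, $z_t = (x_t^*, 0) \in P_t$ from the optimal SOCO points and bound each leg by splitting the displacement into its first $d$ coordinates and its last coordinate via the triangle inequality for $\norm{\cdot}_p$, giving $\sum_t \norm{x_t^* - x_{t-1}^*}_p + 2\sum_t f_t(x_t^*) \leq 2\cdot OPT_{SOCO}$. Your explicit feasibility and start-point checks are just added care, not a different argument.
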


\begin{lemma}\label{lemma:ratio_CBC_SOCO_alg}
In the reduction, the cost incurred by SOCO algorithm $A'$ is less than or equal to $2$ times the cost incurred by CBC algorithm $A$, i.e.
$$ALG_{SOCO} \leq 2\cdot ALG_{CBC},$$
where $ALG_{CBC}, ALG_{SOCO}$ are defined in equations \eqref{thm:Reduce_SOCO_to_CBC:e2} and \eqref{thm:Reduce_SOCO_to_CBC:e4}.
\end{lemma}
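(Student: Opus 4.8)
The plan is to follow the correspondence between the CBC trajectory and the SOCO trajectory coordinate-by-coordinate, and then bound the SOCO cost round-by-round by the lengths of the CBC moves. First I would fix notation reflecting the construction: since $y_t \in K_t = \mathbf{epi}\, f_t \subseteq \mathbb{R}^{d+1}$, write $y_t = (x_t, h_t)$, where $x_t \in \mathbb{R}^d$ is exactly the point $A'$ commits in round $t$ and $h_t \geq f_t(x_t) \geq 0$; and since $z_t \in P_t$, write $z_t = (w_t, 0)$ with $w_t \in \mathbb{R}^d$ and, by the choice of starting point, $w_0 = x_0$. The whole argument rests on two elementary facts about the $\ell_p$ norm on $\mathbb{R}^{d+1}$: dropping the last coordinate is non-expansive, $\norm{u}_p \leq \norm{(u,s)}_p$, and the last coordinate is dominated by the norm, $|s| \leq \norm{(u,s)}_p$.

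Next I would bound the hitting costs. For each $t$ we have $z_t - y_t = (w_t - x_t, -h_t)$, so $f_t(x_t) \leq h_t \leq \norm{z_t - y_t}_p$; summing gives $\sum_{t=1}^T f_t(x_t) \leq \sum_{t=1}^T \norm{z_t - y_t}_p$. Then I would bound the movement costs by the triangle inequality together with non-expansiveness of the projection onto the first $d$ coordinates: $\norm{x_t - x_{t-1}}_p \leq \norm{x_t - w_{t-1}}_p + \norm{w_{t-1} - x_{t-1}}_p \leq \norm{y_t - z_{t-1}}_p + \norm{y_{t-1} - z_{t-1}}_p$, where for $t = 1$ the term $\norm{w_0 - x_0}_p$ vanishes since $w_0 = x_0$. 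Summing over $t$ and re-indexing yields $\sum_{t=1}^T \norm{x_t - x_{t-1}}_p \leq \sum_{t=1}^T \norm{y_t - z_{t-1}}_p + \sum_{t=1}^{T-1} \norm{z_t - y_t}_p$. Adding the two bounds, each $\norm{z_t - y_t}_p$ is counted at most twice and each $\norm{y_t - z_{t-1}}_p$ at most once, so $ALG_{SOCO} \leq 2\sum_{t=1}^T \norm{z_t - y_t}_p + \sum_{t=1}^T \norm{y_t - z_{t-1}}_p \leq 2\,ALG_{CBC}$.

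The argument is largely bookkeeping, and the only delicate points are keeping the $t = 1$ boundary term consistent — which is precisely why $z_0$ is pinned to $(x_0, 0)$ — and invoking the two $\ell_p$-norm inequalities in the correct direction, since these are the only places where the specific structure of the epigraph construction and the interleaved hyperplanes $P_t$ enters. I would therefore state those two inequalities explicitly rather than bury them inside a chain of $\leq$'s. Together with Lemma \ref{lemma:ratio_CBC_SOCO_opt}, this immediately gives the $4C$ bound claimed in Proposition \ref{prop:Reduce_SOCO_to_CBC}.
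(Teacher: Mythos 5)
Your proof is correct and takes essentially the same route as the paper's: bound $f_t(x_t)$ by the last coordinate of $y_t$, which is at most $\norm{z_t - y_t}_p$, and bound $\norm{x_t - x_{t-1}}_p$ by the triangle inequality through an adjacent hyperplane point combined with non-expansiveness of dropping the last coordinate, so that each CBC movement term is counted at most twice. The only difference is that you route the movement bound through $z_{t-1}$ while the paper's displayed chain routes it through $z_t$; your choice is actually the cleaner one, since every resulting term is literally one of the movement costs in the definition of $ALG_{CBC}$ (the paper's final line contains $\norm{z_t - y_{t-1}}$ terms, a harmless indexing slip relative to its own definition \eqref{thm:Reduce_SOCO_to_CBC:e2}), and your explicit treatment of the $t=1$ boundary via $z_0=(x_0,0)$ is also handled correctly.
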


To complete the proof the proposition using these lemmas, we note that Lemma \ref{lemma:ratio_CBC_SOCO_opt} and Lemma \ref{lemma:ratio_CBC_SOCO_alg} give us that the sequence of hitting cost functions $f_1, \cdots, f_T$ satisfies
\begin{equation*}
    \begin{aligned}
    \frac{ALG_{SOCO}}{OPT_{SOCO}} &= \frac{ALG_{SOCO}}{ALG_{CBC}}\cdot \frac{ALG_{CBC}}{OPT_{CBC}}\cdot \frac{OPT_{CBC}}{OPT_{SOCO}}\\
    &\leq 2 \cdot C \cdot 2\\
    &= 4C.
    \end{aligned}
\end{equation*}
%\end{proof}

All that remains is to prove Lemma \ref{lemma:ratio_CBC_SOCO_opt} and \ref{lemma:ratio_CBC_SOCO_alg}.

\begin{proof}[Proof of Lemma \ref{lemma:ratio_CBC_SOCO_opt}]
Suppose the point sequence $x_1^*, x_2^*, \cdots, x_T^*$ is picked by the offline optimal in the SOCO problem. Let $y_t' = (x_t^*, f_t(x_t^*)), z_t' = (x_t^*, 0)$.  See Figure \ref{figure:CBC_Reduction_OPT} for an illustration. 

Thus, we obtain that
\begin{subequations}\label{lemma:ratio_CBC_SOCO_opt:e1}
\begin{align}
OPT_{CBC} &\leq \sum_{t=1}^T\left(\norm{y_t' - z_{t-1}'}_p + \norm{z_t' - y_t'}_p\right)\nonumber\\
&= \sum_{t=1}^T \left(\norm{(x_t^* - x_{t-1}^*, f_t(x_t^*))}_p + \norm{(\bm{0}, f_t(x_t^*))}_p\right)\label{lemma:ratio_CBC_SOCO_opt:e1_0}\\
&\leq \sum_{t=1}^T \left(\norm{(x_t^* - x_{t-1}^*, 0)}_p + 2\norm{(\bm{0}, f_t(x_t^*))}_p\right)\label{lemma:ratio_CBC_SOCO_opt:e1_1}\\
&= \sum_{t=1}^T \left(\norm{x_t^* - x_{t-1}^*}_p + 2f_t(x_t^*)\right)\label{lemma:ratio_CBC_SOCO_opt:e1_2}\\
&\leq 2\sum_{t=1}^T \left(\norm{x_t^* - x_{t-1}^*}_p + f_t(x_t^*)\right)\nonumber\\
&= 2\cdot OPT_{SOCO},\nonumber
\end{align}
\end{subequations}
where we use $\bm{0}$ to denote $d$ zeros in \eqref{lemma:ratio_CBC_SOCO_opt:e1_0}; the triangle inequality in \eqref{lemma:ratio_CBC_SOCO_opt:e1_1}; the definition of $\ell_p$ norm and the fact that $f_t$ is non-negative in \eqref{lemma:ratio_CBC_SOCO_opt:e1_2}.
\end{proof}

\begin{proof}[Proof of Lemma \ref{lemma:ratio_CBC_SOCO_alg}]
Recall that we assume $y_1, z_1, \cdots, y_T, z_T$ is the point sequence picked by $A$ in the CBC problem, and $x_t = (y_{t, 1}, y_{t, 2}, \cdots, y_{t, d})$,  i.e., the first $d$ elements of $y_t$. Notice that $(x_t, 0)$ is the projection of $y_t$ onto the hyperplane $P_t$.  See Figure \ref{figure:CBC_Reduction_ALG}) for an illustration.

It follows that, 
\begin{subequations}\label{lemma:ratio_CBC_SOCO_alg:e1}
\begin{align}
&ALG_{SOCO}\nonumber\\
={}& \sum_{t=1}^T\left(f_t(x_t) + \norm{x_t - x_{t-1}}_p\right)\nonumber\\
\leq{}& \sum_{t=1}^T\left(y_{t, d+1} + \norm{(y_{t, 1} - y_{t-1, 1}, \cdots, y_{t, d} - y_{t-1, d})}_p\right)\label{lemma:ratio_CBC_SOCO_alg:e1_0}\\
\leq{}& \sum_{t=1}^T\Big(y_{t, d+1} + \norm{(z_{t, 1} - y_{t-1, 1}, \cdots, z_{t, d} - y_{t-1, d})}_p\nonumber\\
&+ \norm{(y_{t, 1} - z_{t, 1}, \cdots, y_{t, d} - z_{t, d})}_p\Big)\label{lemma:ratio_CBC_SOCO_alg:e1_1}\\
\leq{}& \sum_{t=1}^T\left( \norm{z_t - y_t} + \norm{z_t - y_{t-1}} + \norm{y_t - z_t}\right)\label{lemma:ratio_CBC_SOCO_alg:e1_2}\\
\leq{}& 2\sum_{t=1}^T\left( \norm{z_t - y_t} + \norm{z_t - y_{t-1}}\right)\nonumber\\
={}& 2\cdot ALG_{CBC},\nonumber
\end{align}
\end{subequations}
where we use the fact that $y_t \in \mathbf{epi} f_t$ in \eqref{lemma:ratio_CBC_SOCO_alg:e1_0}; the triangle inequality in \eqref{lemma:ratio_CBC_SOCO_alg:e1_1}; and the definition of $\ell_p$ norm in \eqref{lemma:ratio_CBC_SOCO_alg:e1_2}.
\end{proof}

\section{A Connection to \cite{lin2012online}} \label{Appendix:Geo}

In this section we show that the problem setting introduced in \cite{lin2012online} can be viewed as a special case of the sufficient conditions introduced in Section \ref{sec:pred_helps}.  Note that we make some small modifications to notation in order to avoid conflicts with other parts of this paper.

In \cite{lin2012online}, the feasible set $\mathcal{F}$ is defined as $\mathcal{F} = \{x \in \mathbb{R}^d \mid x_i \geq 0, \forall i\}$. The $d-$dimensional vector $x_t = (x_{t,1}, x_{t,2}, \cdots, x_{t, d}) \in \mathcal{F}$ represents the number of different kinds of servers running at timestep $t$. The hitting cost function $f_t$ is assumed to satisfy
\begin{equation}\label{GLB_e1}
    f_t(x) \geq e_0 \cdot x, \forall x \in \mathcal{F},
\end{equation}
where $e_0 \in \left(\mathbb{R}^+\right)^d$ is the minimum cost to operate each kind of the servers. The switching/movement cost is given by
\begin{equation}\label{GLB_e2}
    c(x_t, x_{t-1}) = \beta \cdot (x_t - x_{t-1})^+,
\end{equation}
where $\beta \in \left(\mathbb{R}^+\right)^d$ is the cost to start each kind each kind of the servers.

To connect this model to the sufficient conditions in Section \ref{sec:pred_helps} note that, since $v_t$ is the global minimum of $f_t$, we have
\begin{equation}\label{GLB_e3}
    f_t(x) \geq f_t(v_t) \geq e_0\cdot v_t,
\end{equation}
where we use condition \eqref{GLB_e1} in the last step.

Therefore, we can lower bound $f_t(x)$ by
\begin{subequations}\label{GLB:e4}
\begin{align}
    f_t(x)&\geq \frac{1}{2}\left(e_0 \cdot (x + v_t)\right)\label{GLB:e4:s1}\\
    &= \frac{1}{2}\sum_{s=1}^d e_{0,s}(x_s + v_{t,s})\nonumber\\
    &\geq \frac{1}{2}\sum_{s=1}^d e_{0,s}\left((x_s - v_{t,s})^+ + (v_{t,s} - x_s)^+\right)\label{GLB:e4:s2}\\
    &= \frac{1}{2}\sum_{s=1}^d \frac{e_{0,s}}{\beta_s}\left(\beta_s(x_s - v_{t,s})^+ + \beta_s(v_{t,s} - x_s)^+\right)\nonumber\\
    &\geq \frac{1}{2}\min_s \frac{e_{0,s}}{\beta_s}\cdot \left(c(x, v_t) + c(v_t, x)\right),\label{GLB:e4:s3}
\end{align}
\end{subequations}
where we combine \eqref{GLB_e3} with \eqref{GLB_e1} in \eqref{GLB:e4:s1}; the fact that $x + y \geq (x - y)^+ + (y - x)^+, \forall x, y \in \mathbb{R}^+\cup\{0\}$ in \eqref{GLB:e4:s2}; \eqref{GLB_e2} in \eqref{GLB:e4:s3}.

\section{Proof of Theorem \ref{thm:AFHC_non_convex_1}}\label{Appendix:AFHC_non_convex_1}
Lemma \ref{lemma:ARFHC_general_cr} gives us that
\begin{equation}\label{Coro_AFHC_non_convex:e1}
    \sum_{h=1}^w cost\left(SFHC(h)\right) \leq \left(1 + \frac{1}{w}\max\left(\frac{\eta}{\lambda}, 2(\eta - 1)\right)\right)\cdot cost(OPT).
\end{equation}
By Lemma \ref{lem:AFHC_non_convex_1}, we also have that
\begin{equation}\label{Coro_AFHC_non_convex:e2}
    cost(DSFHC) \leq \frac{1}{w}\left(1 + \frac{\alpha}{\lambda}\right)\sum_{h=1}^w cost\left(SFHC(h)\right).
\end{equation}
Combining \eqref{Coro_AFHC_non_convex:e1} and \eqref{Coro_AFHC_non_convex:e2}, we obtain the conclusion of the theorem.

\begin{comment}
We adopt the definition of the total cost function as in Theorem \ref{thm:AFHC_non_convex_1}. Specifically, define $C_T: \mathbb{R}^{d\times T} \to \mathbb{R}$ as
$$C_T(x) = \sum_{t=1}^T f_t(x_t) + c(x_t, x_{t-1}),$$
where $x = (x_1, x_2, \cdots, x_T)$.

Since each hitting cost function $f_t$ is convexifiable and the convexifiers are upper bounded by the constant $\alpha$, we have that $f_t(x) + \frac{\alpha}{2}\norm{x}_2^2$ is convex. Notice that $\norm{x}_2^2 = \sum_{t=1}^T \norm{x_t}_2^2$. Therefore, the function
$$G_T(x) = C_T(x) + \frac{1}{2}\norm{x}_2^2 = \sum_{t=1}^T \left(f_t(x_t) + \frac{\alpha}{2}\norm{x_t}_2^2\right) + c(x_t, x_{t-1})$$
is convex. By Theorem \ref{thm:AFHC_non_convex_1}, we see that
\begin{equation}\label{coro:AFHC_non_convex:e1}
    cost(ARFHC) \leq \frac{1}{W}\left(1 + \frac{2\alpha}{m}\right)\sum_{h = 1}^{W} cost(x^h).
\end{equation}
The conclusion of this corollary can be obtained by substituting the first step of \eqref{thm:RAFHC_general_cr_1:e3} (Jensen Inequality) in the proof of Theorem \ref{thm:ARFHC_general_cr_1} by \eqref{coro:AFHC_non_convex:e1}. The remaining part is identical with the proof of Theorem \ref{thm:ARFHC_general_cr_1}.
\end{comment}

\section{A Semi-Adaptive Adversary}\label{Appendix:Online_Adv}
Our main result in Section \ref{subsec:rand} focuses on the case of an oblivious adversary, but it is interesting from an algorithmic perspective to understand if it is possible to obtain results that hold in the context of non-oblivious adversaries.  We are indeed able to obtain such a result (Theorem \ref{thm:RRFHC_general_cr_1}) for a form of semi-adaptive adversary that was first introduced in the context of adversarial bandits (see Chapter 3 of \cite{bubeck2012regret}). We introduce the semi-adaptive adversarial model here and then prove Theorem \ref{thm:RRFHC_general_cr_1}.  

Suppose for each $\tau \in [T]$ we have a function $\Psi_\tau: \mathbb{R}^d \to \mathcal{D}_\tau$, where $\mathcal{D}_\tau$ contains a finite number of elements. These functions are determined before the game starts and are common knowledge between the online adversary and the learner. Then, a game is played between the adversary and the learner as follows.

\begin{enumerate}
    \item \textbf{At time $\tau=0$:} The online adversary designs hitting costs $f_1, f_2, \cdots, f_{w - 1}$ and reveals them to the learner. The online adversary then commits its choices for these timesteps, $x_1^*, x_2^*, \cdots, x_{w - 1}^*$, but these points are not revealed to the learner.
    \item \textbf{At time $\tau \geq 1$:} If $\tau + w - 1 \leq T$, the online adversary designs hitting cost $f_{\tau+w-1}$ and reveals it to the learner. The online adversary must also commit its choice for timestep $\tau+w-1$, $x_{\tau+w-1}^*$, but this choice is not revealed to the learner. The learner then observes $f_{\tau+w-1}$ and selects $x_{\tau}$. The information $z_{\tau} = \Psi_{\tau}(x_{\tau})$ is revealed to the online adversary.
\end{enumerate}
Intuitively, this means that the adversary may adapt the choice of future cost functions to the choices of the learner, but cannot change a cost function once it is revealed as part of the prediction window of the learner.  Further, the adversary must commit to its actions when revealing the cost function, and cannot wait until seeing the decision of the learner.  Thus, the adversary is not completely adaptive, it cannot wait to choose its action until the end of the instance, but it does have the ability to partially adapt to the decisions of the learner.   %The mapping $\Psi_t: \mathbb{R}^d \to \mathcal{D}_t$ may seem to restrict the power of the online adversary. However, using Randomized SFHC as an example, the number of the possible outcomes of $x_t$ is upper bounded by $2^{2w + 1}$, for all $t$. Therefore, a set $\mathcal{D}_t$ with $2^{2w + 1}$ elements is sufficient to recover $x_t$ exactly. 

In this game setting, the online adversary's cost is
$$cost(ADV) = \sum_{\tau = 1}^T f_\tau(x_\tau^*) + c(x_\tau^*, x_{\tau-1}^*)$$
and our objective is to prove a result in the form of
$$\mathbb{E} cost(ALG) \leq \left(1 + O\left(\frac{1}{w}\right)\right)\mathbb{E} cost(ADV).$$

Notice that when the algorithm is randomized, the online adversary is a random variable. For any online adversary, we must have $cost(OPT) \leq cost(ADV)$ because, when compared with the online adversary, the offline optimal can change its decision on $x_\tau^*$ after timestep $\tau$. However, the online adversary can be more powerful than any oblivious adversary. When the algorithm is deterministic, the best semi-adaptive adversary is identical with the best oblivious online adversary. On the other hand, when the algorithm is randomized, the semi-adaptive adversary will generate hitting costs based on the previous behaviour of the algorithm instead of fixing the hitting costs sequence and the point sequence at the beginning of the game. %By forcing the offline adversary to commit $x_t^*$ when $f_t$ is revealed, we aim to prevent the offline adversary from redesigning the sequence $x_1^*, x_2^*, \cdots, x_T^*$ after all the hitting costs are revealed.

To understand why a semi-adaptive adversary requires changes to Randomized SFHC, consider the following intuition. If the algorithm chooses to synchronize at timestep $\tau$ (pick $x_\tau = v_\tau$), we call $\tau$ an \textit{anchor timestep}. For example, in Randomized SFHC (Version B) (see Algorithm \ref{alg:RRFHC_v2}), $R$ is the set of anchor timesteps. The extra cost incurred by the algorithm can be upper bounded a constant times the offline optimal cost at the same timestep (see Lemma \ref{thm:RRFHC_general_cr_1:lemma_1}). Due to the length limitation of the prediction window, the distance between two anchor timesteps cannot be larger than $w$. Before revealing $f_\tau$, if the online adversary guesses that $\tau$ is an anchor timestep with high probability, it can manipulate the hitting cost function $f_\tau$ so that the offline optimal cost at timestep $\tau$ is much larger than the sum of all other timesteps. If the guess turns out to be correct, the performance bound of our algorithm is extremely bad. To avoid this possibility, we use randomization to pick the anchor timesteps; thus limiting the chance that the adversary guesses correctly.  In particular, let $E_\tau$  denote the event that $\tau$ is picked as an anchor timestep. we can reduce the conditional probability $P\left(E_\tau \mid z_{1:\tau-w}\right)$ to be $O\left(\frac{1}{w}\right)$ for all possible $z_{1:\tau-w}$, where $z_{1:\tau-w}$ is all the information revealed by the learner before $f_\tau$ is decided. Therefore, the online adversary cannot predict $E_\tau$ better than random guess by a constant factor.  This intuition guides the proof that follows.

\begin{proof}[Proof of Theorem \ref{thm:RRFHC_general_cr_1}]

Let $Z_\tau = \Psi_\tau(x_\tau)$ denote the information revealed by the algorithm at time $\tau - 1$. Therefore, $Z_\tau$ is a random variable over the set $\mathcal{D}_\tau$. For convenience, we use $Z_{\tau_1:\tau_2}$ to denote all the information revealed after the adversary takes action at time $\tau_1$ and before it takes action at time $\tau_2+1$. We use $z_{\tau_1:\tau_2}$ to denote a possible outcome of $Z_{\tau_1:\tau_2}$. Therefore, we can assume $f_{\tau+w}$, $H_{\tau+w}^*$ and $M_{\tau+w}^*$ are functions on input $z_{1:\tau}$.

To begin, we state a lemma showing that no matter what set $\Omega$ of synchronized timesteps is chosen, the total cost of Randomized SFHC (Version B) can be bounded by the total cost of the online adversary plus the online adversary costs incurred at timesteps in $\Omega$.  The proof of the lemma follows the completion of the proof of Theorem \ref{thm:RRFHC_general_cr_1}. 

\begin{lemma}\label{thm:RRFHC_general_cr_1:lemma_1}
Under the same assumptions of Theorem \ref{thm:RRFHC_general_cr_1}, suppose a timestep sequence $\Omega = \{t_0, t_1, \cdots\}$ satisfies $t_0 = 0, t_i \geq t_{i-1} + 2$. For any sequence $x_{1:T}'$, the total cost incurred by the algorithm  which picks the minimizers of the optimization problem \eqref{thm:RRFHC_general_cr_1:lemma_1:opt}
\begin{equation}\label{thm:RRFHC_general_cr_1:lemma_1:opt}
    \begin{aligned}
    &\min_x \sum_{\tau=1}^T f_\tau(x_\tau) + c(x_\tau, x_{\tau-1})\\
    &\text{subject to }x_\tau = v_\tau, \forall \tau \in \Omega\cap [T].
    \end{aligned}
\end{equation}
is upper bounded by
$$\sum_{\tau=1}^T (H_\tau' + M_\tau') + \frac{\eta}{\lambda}\sum_{s \in \Omega \cap [H]}H_s' + (\eta - 1)\sum_{s \in \Omega \cap [H]}\left(M_s' + M_{s+1}'\right),$$
where we use the notation $H_\tau' = f_t(x_\tau'), M_\tau' = c(x_\tau', x_{\tau-1}')$.
\end{lemma}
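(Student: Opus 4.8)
The plan is to replay the argument used to prove Lemma~\ref{lemma:ARFHC_general_cr}, but with the arithmetic-progression synchronization set $\Omega_h$ replaced by the arbitrary set $\Omega$ and with the offline optimal $x^*$ replaced by the arbitrary reference sequence $x'$. The proof of Lemma~\ref{lemma:ARFHC_general_cr} never used the optimality of $x^*$ --- only that $x^*$ is a \emph{feasible} competitor against which each segment-wise minimization is compared --- so this substitution is harmless, and the only genuinely new point is how the separation condition $t_i \geq t_{i-1}+2$ enters.

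First I would decompose the optimization problem \eqref{thm:RRFHC_general_cr_1:lemma_1:opt} into independent pieces. Write $\Omega \cap [T] = \{t_1 < t_2 < \cdots < t_k\}$, with $t_0 = 0$ the fixed initial index. Because the constraints $x_\tau = v_\tau$ for $\tau \in \Omega\cap[T]$ decouple the objective into the segments $[t_{i-1}, t_i]$, $i=1,\dots,k$, together with the tail $[t_k, T]$, the algorithm's cost equals the sum of the minimal segment costs, and in particular it is at most the cost $C_T(x'')$ of the spliced sequence $x''$ obtained from $x'$ by overwriting $x''_s = v_s$ for every $s \in \Omega\cap[T]$ and leaving every other coordinate (including $x''_0 = x_0$) unchanged, since $x''$ is feasible for \eqref{thm:RRFHC_general_cr_1:lemma_1:opt}. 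Next I would expand $C_T(x'') - C_T(x')$ as a sum of per-timestep differences: only timesteps $s \in \Omega\cap[T]$ contribute to the hitting-cost difference, and only timesteps $t$ with $t\in\Omega$ or $t-1\in\Omega$ contribute to the movement-cost difference. \emph{This is exactly where the hypothesis $t_i \geq t_{i-1}+2$ is used}: it forbids two consecutive indices from both lying in $\Omega$, so every perturbed movement term has precisely one endpoint relocated to a minimizer, i.e.\ a term of the form $c(x'_{s+1}, v_s) - c(x'_{s+1}, x'_s)$ for $s\in\Omega\cap[T-1]$, or $c(v_s, x'_{s-1}) - c(x'_s, x'_{s-1})$ for $s\in\Omega\cap[T]$, and no $c(v_s, v_{s-1})$ cross terms ever appear. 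This yields exactly the analog of \eqref{thm:RAFHC_general_cr_1:e1_3}.

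It then remains to bound each difference term using the two conditions, exactly as in \eqref{thm:RAFHC_general_cr_1:e1_4}--\eqref{thm:RAFHC_general_cr_1:e1_6}: the Order of Growth condition together with $v_s = \argmin_x f_s(x)$ gives $f_s(v_s) \leq f_s(x'_s) = H'_s$ and $c(v_s, x'_s) \leq \frac{1}{\lambda}H'_s - c(x'_s, v_s)$, while the Approximate Triangle Inequality gives $c(x'_{s+1}, v_s) \leq \eta\big(M'_{s+1} + c(x'_s, v_s)\big)$ and $c(v_s, x'_{s-1}) \leq \eta\big(M'_s + c(v_s, x'_s)\big) \leq \eta\big(M'_s + \frac{1}{\lambda}H'_s - c(x'_s, v_s)\big)$. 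Substituting, the $\eta\,c(x'_s, v_s)$ contributions cancel in pairs just as in the proof of Lemma~\ref{lemma:ARFHC_general_cr}, and after collecting terms (with the convention $M'_{T+1} = 0$) one obtains
$$\text{(algorithm cost)} \;-\; \sum_{\tau=1}^T\big(H'_\tau + M'_\tau\big) \;\leq\; \frac{\eta}{\lambda}\sum_{s\in\Omega\cap[T]} H'_s \;+\; (\eta-1)\sum_{s\in\Omega\cap[T]}\big(M'_s + M'_{s+1}\big),$$
which is precisely the claimed inequality.

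The calculation is entirely routine once the decomposition is in place; the only points needing care are the two boundary segments --- the first segment $[0,t_1]$, whose left endpoint is pinned to the given initial point $x_0$ rather than to a minimizer (harmless, since $x''$ agrees with $x'$ at index $0$ and the only relocated coordinate in that segment is $x_{t_1}$), and the tail segment $[t_k,T]$, which carries only a left-endpoint constraint --- but in both cases the segment-wise optimality bound against $x''$ applies verbatim. Thus I expect the main, and rather modest, obstacle to be simply the bookkeeping of these boundary cases and confirming that $t_i \geq t_{i-1}+2$ is exactly the hypothesis that keeps the movement-cost accounting identical to the arithmetic-progression setting of Lemma~\ref{lemma:ARFHC_general_cr}.
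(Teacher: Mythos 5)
Your proposal is correct and follows essentially the same route as the paper's own proof: decompose the constrained problem into independent segments, compare each segment's minimizer against the spliced reference sequence (the paper's inequality \eqref{lemma:RRFHC_core_1:e1}), and then bound the perturbed movement terms exactly via Conditions I and II with the same cancellation of the $\eta\, c(x'_s, v_s)$ terms, including the same treatment of the initial and tail segments. Your explicit remark on why $t_i \geq t_{i-1}+2$ prevents $c(v_{s+1}, v_s)$ cross terms is a nice clarification that the paper leaves implicit, but it is not a different argument.
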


Lemma \ref{thm:RRFHC_general_cr_1:lemma_1} can be interpreted as follows: Randomized SFHC incurs an extra cost of $ \frac{\eta}{\lambda}H_s^* + (\eta - 1)(M_s^* + M_{s+1}^*)$ if $s \in R$. For convenience, we define a random variable
$$extra(s) = \begin{cases}
\frac{\eta}{\lambda}H_s^* + (\eta - 1)(M_s^* + M_{s+1}^*) & s \in R\\
0 & \text{ Otherwise}.
\end{cases}$$
Next we state a one more technical lemma crucial to our proof.  Again we defer its proof until after the proof of Theorem \ref{thm:RRFHC_general_cr_1}. 

\begin{lemma}\label{thm:RRFHC_general_cr_1:lemma_2}
Under the same assumptions of Theorem \ref{thm:RRFHC_general_cr_1}, if we use the first step of Randomized RFHC (Algorithm \ref{alg:RRFHC_v2}) to generate the timestep sequence $R$, we have
$$\forall \tau \in \{1, 2, \cdots, T\}, P(\tau \in R\mid Z_{1:\tau-w+1} = z_{1:\tau-w+1}) \leq \frac{2}{w - 2}.$$
\end{lemma}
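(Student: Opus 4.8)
\medskip
\noindent\textbf{Proof plan for Lemma \ref{thm:RRFHC_general_cr_1:lemma_2}.}
The plan is to exploit the renewal structure of $R$ together with the fact that a prediction window of length $w$ cannot ``see'' the anchor timesteps close to $\tau$. Write $S_w = \{n \in \mathbb{Z} : w/2 < n \leq w-1\}$ for the support of the increments $Y_j$. Since $w \geq 4$, a direct count gives $|S_w| = \lceil w/2\rceil - 1 \geq (w-2)/2$, so it suffices to show $P(\tau \in R \mid Z_{1:\tau-w+1} = z) \leq 1/|S_w|$ for every $z$. I will also use repeatedly that any two elements of $R$ differ by more than $w/2$ (each $Y_j > w/2$), and that each integer $Y_j > w/2$ in fact satisfies $2Y_j \geq w+1$. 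First I would dispose of the boundary case $\tau \leq w-1$, where $Z_{1:\tau-w+1}$ is empty and the conditioning is vacuous: here $t_0 = 0 < \tau$ and, for $j \geq 2$, $t_j > jw/2 \geq w > \tau$, so $\tau \in R$ iff $Y_0 = \tau$, giving $P(\tau \in R) \leq 1/|S_w|$.

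For the main case $\tau \geq w$ (so $\tau - w + 1 \geq 1$), let $i$ be the index with $t_i < \tau - w + 1 \leq t_{i+1}$ (well defined since $t_0 = 0 < \tau - w + 1$ and $t_j \to \infty$). Two arithmetic facts are the engine of the proof: (a) since $t_i \leq \tau - w$ and $Y_i \leq w-1$, we get $\tau - w + 1 \leq t_{i+1} \leq \tau - 1$, so $t_{i+1} < \tau$; and (b) $t_{i+3} = t_{i+1} + Y_{i+1} + Y_{i+2} \geq (\tau - w + 1) + (w+1) = \tau + 2 > \tau$, while all $t_j$ with $j \leq i$ are below $\tau - w + 1$. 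Hence the only element of $R$ that can possibly equal $\tau$ is $t_{i+2}$, and so
\[
\{\tau \in R\} = \{t_{i+2} = \tau\} = \{Y_{i+1} = \tau - t_{i+1}\}.
\]

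The crux — and the step I expect to be the main obstacle — is the measurability claim that $Z_{1:\tau-w+1}$ is $\sigma(Y_0, \dots, Y_i)$-measurable. The idea is that the constrained minimization defining Randomized SFHC (Version B) (Algorithm \ref{alg:RRFHC_v2}) decomposes into independent blocks $(t_j, t_{j+1})$, so the learner's point $x_s$ depends only on the two anchors bounding the block containing $s$ and on the hitting costs inside that block; by fact (a), for $s \leq \tau - w + 1$ that block ends at an anchor $\leq t_{i+1} \leq \tau - 1$. Taking the semi-adaptive adversary deterministic without loss of generality (it picks $f_s$ as a function of $z_{1:s-w}$), one then shows by induction along the transcript that $x_1, \dots, x_{\tau-w+1}$, hence $f_1, \dots, f_\tau$, hence $z_1, \dots, z_{\tau-w+1}$, are all determined by $\{t_0, \dots, t_{i+1}\}$, i.e. by $(Y_0, \dots, Y_i)$, since $t_{i+1} = Y_0 + \cdots + Y_i$; crucially nothing in this prefix ever references $Y_{i+1}$ or any later increment. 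The delicate part is keeping the induction honest while the adversary reacts to the learner's signals, but facts (a) and (b) are precisely what keeps every referenced anchor at index $\leq i+1$, and this is where the hypotheses $w \geq 4$ and the increment range $(w/2, w-1]$ enter.

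Finally I would conclude as follows. The index $i$ is a stopping time for the filtration $\mathcal{F}_k = \sigma(Y_0, \dots, Y_k)$, and the $Y_j$ are i.i.d., so $Y_{i+1}$ is independent of $\mathcal{F}_i$ and uniform on $S_w$, while $t_{i+1}$ is $\mathcal{F}_i$-measurable; hence $P(Y_{i+1} = \tau - t_{i+1} \mid \mathcal{F}_i) \leq 1/|S_w|$ almost surely. Since $\sigma(Z_{1:\tau-w+1}) \subseteq \mathcal{F}_i$ by the measurability claim, the tower property gives
\[
P(\tau \in R \mid Z_{1:\tau-w+1}) = \mathbb{E}\!\left[ P\!\left(Y_{i+1} = \tau - t_{i+1} \mid \mathcal{F}_i\right) \mid Z_{1:\tau-w+1}\right] \leq \frac{1}{|S_w|} \leq \frac{2}{w-2},
\]
which is the desired bound.
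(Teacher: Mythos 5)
Your proposal is correct and follows essentially the same route as the paper's proof: both pivot on the first anchor at or after $\tau-w+1$ (your $t_{i+1}$, the paper's $t_q$), reduce the event $\{\tau \in R\}$ to a single prescribed value of one future increment, and bound its probability by the maximum point mass $\leq \frac{2}{w-2}$. The only notable difference is that you make rigorous, via the stopping-time and stopped-$\sigma$-algebra argument together with the block/transcript induction showing $\sigma(Z_{1:\tau-w+1})\subseteq \mathcal{F}_i$, the conditional-independence step that the paper simply asserts as $P(\tau\in R\mid t_q=j, Z_{1:\tau-w+1}=z)=P(\tau\in R\mid t_q=j)$, and in doing so you also dispense with the paper's two-case split on the location of the anchor and its (slightly loose) claim that the window contains exactly one anchor.
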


Recalling that $H_\tau^*$ and $M_\tau^*$ are functions of previous information $z_{1:\tau-w}$, we can apply Lemma \ref{thm:RRFHC_general_cr_1:lemma_1} and Lemma \ref{thm:RRFHC_general_cr_1:lemma_2} to conclude that
\begin{subequations}\label{thm:RRFHC_general_cr_1:e1}
\begin{align}
    &\mathbb{E} cost(RSFHC) \nonumber\\
    \leq{}& \sum_{\tau=1}^T \sum_{z_{1:\tau}}P(Z_{1:\tau} = z_{1:\tau})\left(\left(H_\tau^* + M_\tau^*\right) + \mathbb{E}[extra(\tau)\mid Z_{1:\tau} = z_{1:\tau}]\right)\label{thm:RRFHC_general_cr_1:e1:s1}\\
    \leq{}& \mathbb{E} cost(ADV) + \sum_{\tau=1}^T\sum_{z_{1:\tau}}P(Z_{1:\tau} = z_{1:\tau})\mathbb{E}[extra(\tau)\mid Z_{1:\tau} = z_{1:\tau}].\nonumber
\end{align}
\end{subequations}
In \eqref{thm:RRFHC_general_cr_1:e1:s1}, notice that $H_\tau^*, M_\tau^*$ and $extra(\tau)$ are all determined when $Z_{1:\tau} = z_{1:\tau}$ is given.

We also see that
\begin{subequations}\label{thm:RRFHC_general_cr_1:e2}
\begin{align}
    &\sum_{\tau=1}^T\sum_{z_{1:\tau}}P(Z_{1:\tau} = z_{1:\tau})\mathbb{E}[extra(\tau)\mid Z_{1:\tau} = z_{1:\tau}]\nonumber\\
    ={}& \sum_{\tau=1}^T\sum_{z_{1:\tau-w+1}}P(Z_{1:\tau-w+1} = z_{1:\tau-w+1})\nonumber\\
    &\cdot P(\tau \in R \mid Z_{1:\tau-w+1} = z_{1:\tau-w+1})\nonumber\\
    &\cdot \left( \frac{\eta}{\lambda}H_\tau^* + (\eta - 1)(M_\tau^* + M_{\tau+1}^*) \right)\label{thm:RRFHC_general_cr_1:e2:s1}\\
    \leq{}& \frac{2}{w - 2}\sum_{\tau=1}^T\sum_{z_{1:\tau-w+1}}P(Z_{1:\tau-w+1} = z_{1:\tau-w+1})\nonumber\\
    &\cdot \left( \frac{\eta}{\lambda}H_\tau^* + (\eta - 1)(M_\tau^* + M_{\tau+1}^*) \right)\label{thm:RRFHC_general_cr_1:e2:s2}\\
    \leq{}& \frac{2}{w - 2}\max\{\frac{\eta}{\lambda}, 2(\eta - 1)\}\mathbb{E} cost(ADV).\label{thm:RRFHC_general_cr_1:e2:s3}
\end{align}
\end{subequations}
In \eqref{thm:RRFHC_general_cr_1:e2:s1}, notice that for a fixed sequence of outcomes $z_{1:\tau-w+1}$, we have
\begin{equation*}
    \begin{aligned}
    &\sum_{z_{\tau-w+2:\tau}}P(Z_{1:\tau} = z_{1:\tau})\mathbb{E}[extra(\tau)\mid Z_{1:\tau} = z_{1:\tau}]\\
    ={}& \sum_{z_{\tau-w+2:\tau}}P(\tau \in R \text{ and }Z_{1:\tau} = z_{1:\tau})\\
    &\cdot \left( \frac{\eta}{\lambda}H_\tau^* + (\eta - 1)(M_\tau^* + M_{\tau+1}^*) \right)\\
    ={}& P(\tau\in R \text{ and } Z_{1:\tau-w+1} = z_{1:\tau-w+1})\\
    &\cdot \left( \frac{\eta}{\lambda}H_\tau^* + (\eta - 1)(M_\tau^* + M_{\tau+1}^*) \right)\\
    ={}& P(\tau\in R \mid Z_{1:\tau-w+1} = z_{1:\tau-w+1})\cdot P(Z_{1:\tau-w+1} = z_{1:\tau-w+1})\\
    &\cdot \left( \frac{\eta}{\lambda}H_t^* + (\eta - 1)(M_\tau^* + M_{\tau+1}^*) \right).
    \end{aligned}
\end{equation*}
We use Lemma \ref{thm:RRFHC_general_cr_1:lemma_2} to bound the conditioned probability in \eqref{thm:RRFHC_general_cr_1:e2:s2}. Recall that $H_\tau^*$ and $M_\tau^*$ are given by a function of $z_{1:\tau-w}$, therefore, \eqref{thm:RRFHC_general_cr_1:e2:s3} follows from the fact that
$$\mathbb{E} cost(ADV) = \sum_{\tau=1}^T \sum_{z_{1:\tau-w}} P(Z_{1:\tau-w} = z_{1:\tau-w}) (H_\tau^* + M_\tau^*).$$

Substituting \eqref{thm:RRFHC_general_cr_1:e2} into \eqref{thm:RRFHC_general_cr_1:e1}, we obtain that
$$\mathbb{E} cost(RSFHC) \leq \left(1 + \frac{2}{w - 2}\max\{\frac{\eta}{\lambda}, 2(\eta - 1)\}\right)\mathbb{E} cost(ADV).$$
\end{proof}

We end the section by proving Lemmas \ref{thm:RRFHC_general_cr_1:lemma_1} and \ref{thm:RRFHC_general_cr_1:lemma_2}.

\begin{proof}[Proof of Lemma \ref{thm:RRFHC_general_cr_1:lemma_1}]
Suppose $\Omega \bigcap [T] = \{t_0, t_1, \cdots, t_k\}$.

We adopt the definition of $g_{\tau_1, \tau_2}$ in \eqref{equ:g_function_1} and \eqref{equ:g_function_2} in Section \ref{sec:pred_helps}. Using this notation, the algorithm we consider selects the minimizers of $g_{t_i, t_{i+1}}$ as its choice $x_{t_i+1:t_{i+1}-1}$ for $i \leq k-1$. The algorithm also selects the minimizers of $g_{t_k, t_{k+1}}$ as its choice $x_{t_k+1:T}$. For notation convenience, we define $x_{t_k+1:t_{k+1}-1} := x_{t_k+1:T}$.

Therefore, for all $0 \leq i \leq k$, since $x_{t_i+1: t_{i+1}-1}$ is the minimizer of $g_{t_i, t_{i+1}}$, we have that
\begin{equation}\label{lemma:RRFHC_core_1:e1}
    g_{t_i + 1, t_{i+1} - 1}(x_{t_i + 1: t_{i+1} - 1}) \leq g_{t_i + 1, t_{i+1} - 1}(x_{t_i + 1: t_{i+1} - 1}'),
\end{equation}
where $x$ is the solution picked by the algorithm and $x'$ is the sequence we want to compare with.

Notice that $\forall 0 \leq s \leq T - 1$, we have
\begin{equation}\label{lemma:RRFHC_core_1:e2_1}
    c(x_{s+1}', v_{s}) \leq \eta \left(c(x_{s+1}', x_{s}') + c(x_{s}', v_{s})\right).
\end{equation}
by the Approximate Triangle Inequality (Condition II), and 
\begin{equation}\label{lemma:RRFHC_core_1:e2_2}
    \begin{aligned}
    c(v_{s + 1}, x_{s}') &\leq \eta \left(c(x_{s + 1}', x_{s}') + c(v_{s + 1}, x_{s + 1}')\right)\\
    &\leq \eta \left(c(x_{s + 1}', x_{s}') + \frac{1}{\lambda}f_{s + 1}(x_{s + 1}') - c(x_{s + 1}', v_{s + 1})\right),
    \end{aligned}
\end{equation}
by both Condition I and II.

Summing \eqref{lemma:RRFHC_core_1:e1} over $0 \leq i \leq k$, we obtain that
\begin{subequations}\label{lemma:RRFHC_core_1:e2}
\begin{align}
    cost(x) ={}& \sum_{i = 0}^k g_{t_i + 1, t_{i+1} - 1}(x_{t_i + 1: t_{i+1} - 1})\nonumber\\
    \leq{}& \sum_{i = 0}^k g_{t_i + 1, t_{i+1} - 1}(x_{t_i + 1: t_{i+1} - 1}')\label{lemma:RRFHC_core_1:e2:s1}\\
    ={}& \sum_{\tau=1}^T \left(H_\tau' + M_\tau'\right) - \sum_{i=1}^k H_{t_i}' + \sum_{i=1}^k f_{t_i}(v_{t_i})\nonumber\\
    &+ \sum_{i=1}^k \left(c(x_{t_i+1}', v_{t_i}) - c(x_{t_i+1}', x_{t_i}')\right)\nonumber\\
    &+ \sum_{i=1}^k \left(c(v_{t_i}, x_{t_i - 1}') - c(x_{t_i}', x_{t_i - 1}')\right)\nonumber\\
    \leq{}& \sum_{\tau=1}^T \left(H_\tau' + M_\tau'\right) + \frac{\eta}{\lambda}\sum_{s \in \Omega \cap [T]}H_s'\nonumber\\
    &+ (\eta - 1)\sum_{s \in \Omega \cap [T]}\left(M_s' + M_{s+1}'\right),\label{lemma:RRFHC_core_1:e2:s2}
\end{align}
\end{subequations}
where we use \eqref{lemma:RRFHC_core_1:e1} in \eqref{lemma:RRFHC_core_1:e2:s1}; \eqref{lemma:RRFHC_core_1:e2_1} and \eqref{lemma:RRFHC_core_1:e2_2} in \eqref{lemma:RRFHC_core_1:e2:s2}.

\begin{comment}
we introduce the notation
\begin{equation}\label{lemma:RRFHC_core_1:e0_1}
\begin{aligned}
    &g_{t_i + 1, t_{i+1}-1}(x)\\
    ={}& \left( \sum_{t = t_i + 2}^{t_{i+1} - 1}c(x_t, x_{t-1}) \right) + c(x_{t_i + 1}, v_{t_i}) + c(v_{t_{i+1}}, x_{t_{i+1} - 1})\\
    &+ \left( \sum_{t = t_i + 1}^{t_{i+1} - 1} f_t(x_t) \right) + f_{t_{i+1}}(v_{t_{i+1}}).
\end{aligned}
\end{equation}
Therefore, the algorithm we consider selects the minimizers of $g_{t_i + 1, t_{i+1}-1}$ as its choice $x_{t_i+1}, \cdots, x_{t_{i+1}-1}$.

It is important to notice that since $t_{k+1} > T$, we do not need the regularizer for the interval $(t_k, t_{k+1})$. Therefore, we define
\begin{equation}\label{lemma:RRFHC_core_1:e0_2}
\begin{aligned}
    g_{t_k + 1, t_{k+1}-1}^h(x) = \left( \sum_{t = t_k + 2}^{T}c(x_t, x_{t-1}) \right) + c(x_{t_k + 1}, v_{t_k}) + \left( \sum_{t = t_k + 1}^{T} f_t(x_t) \right).
\end{aligned}
\end{equation}
\end{comment}
\end{proof}

\begin{proof}[Proof of Lemma \ref{thm:RRFHC_general_cr_1:lemma_2}]
There must exists one and exactly one element of set $R$ in the time interval $[\tau - w + 1, \tau - 1]$. We denote this element by $t_q$. Notice that:

If $t_q \in [\tau - w + 1, \tau - \frac{w}{2}]$, we must have $t_{q+2} > t$. Therefore, we see that
$$P(\tau \in R \mid t_q) = P(\tau = t_{q+1} \mid t_q) = P(Y_q = \tau - t_q) \leq \frac{2}{w - 2}.$$

Else, we must have $t_q \in (\tau - \frac{w}{2}, \tau - 1]$. In this case, we must have $t_{q+1} > \tau$. Therefore, we see that
$$P(\tau \in R \mid t_q) = 0.$$

Therefore, we obtain that
\begin{equation*}
    \begin{aligned}
    &P(\tau \in R\mid Z_{1:\tau-w+1} = z_{1:\tau-w+1})\\
    ={}& \sum_{j = \tau - w + 1}^{\tau - 1}P(\tau \in R\mid t_q = j, Z_{1:\tau-w+1} = z_{1:\tau-w+1})\\ 
    &\cdot P(t_q = j \mid Z_{1:\tau-w+1} = z_{1:\tau-w+1})\\
    ={}& \sum_{j = \tau - w + 1}^{\tau - 1}P(\tau \in R\mid t_q = j) P(t_q = j \mid Z_{1:\tau-w+1} = z_{1:\tau-w+1})\\
    \leq{}& \sum_{j = \tau - w + 1}^{\tau - 1}\frac{2}{w - 2} P(t_q = j \mid Z_{1:\tau-w+1} = z_{1:\tau-w+1})\\
    ={}& \frac{2}{w - 2}.
    \end{aligned}
\end{equation*}
\end{proof}

\end{document}